\documentclass[runningheads]{llncs}
%
\usepackage{amsmath}
\usepackage{amssymb}
\usepackage{stmaryrd}
\usepackage[table,xcdraw]{xcolor}
\usepackage{bm}
\usepackage{graphicx}
\usepackage{subfigure}
\usepackage{xspace}
\usepackage{todonotes}
\usepackage{booktabs}
\usepackage{comment}
\usepackage{mathrsfs}
\usepackage{enumerate}
\usepackage{hyperref}
\usepackage{float}
\usepackage[ruled]{algorithm2e}

\newcommand \domB {\mathbb{B}}
\newcommand \domN {\mathbb{N}}
\newcommand \domR {\mathbb{R}}

\newcommand \T {\mathrm{T}}
\newcommand \V {\mathrm{V}}

\newcommand \Coloneqq {::=}

\newcommand \network[1] {\mathcal{#1}}
\newcommand \automaton[1] {\mathcal{#1}}
\newcommand \tableau[1] {\mathcal{#1}}
\newcommand \lang[1] {\mathscr{L}({#1})}

\newcommand \ltl {LTL\xspace}
\newcommand \ltlf {LTL$_f$\xspace}

\newcommand \bltl {BLTL\xspace}

\newcommand \nnf {NNF\xspace}
\newcommand \dnn {DNN\xspace}
\newcommand \qnn {QNN\xspace}
\newcommand \bnn {BNN\xspace}
\newcommand \nn  {NN\xspace}
\newcommand \idl {IDL\xspace}

\newcommand \pspace {\comsps{PSPACE}\xspace}
\newcommand \cnf {CNF\xspace}

\newcommand \mnist {MNIST\xspace}
\newcommand \uci {UCI Adult\xspace}
\newcommand \mse {MSE\xspace}

\newcommand \gpt {GPT-4\xspace}

\newcommand \pac {PAC\xspace}
\newcommand \smt {SMT\xspace}
\newcommand \sat {SAT\xspace}
\newcommand \bdd {BDD\xspace}

\newcommand \linlayer {LIN\xspace}
\newcommand \bnlayer {BN\xspace}
\newcommand \binlayer {BIN\xspace}

\newcommand \acc {Acc\xspace}
\newcommand \accplus {Acc+\xspace}
\newcommand \fair {Fair\xspace}
\newcommand \fairplus {Fair+\xspace}
\newcommand \fairnum {\emph{fair num}\xspace}
\newcommand \sizeoftestset {\emph{test size}\xspace}
\newcommand \as {ASR\xspace}

\newcommand \opX {\mathsf{X}}
\newcommand \opXd {\overline{\mathsf{X}}}
\newcommand \opU {\mathsf{U}}
\newcommand \opG {\mathsf{G}}
\newcommand \opF {\mathsf{F}}
\newcommand \opR {\mathsf{R}}
\newcommand \opN {\rhd}

\newcommand \semantics[1] {\llbracket{#1}\rrbracket}
\newcommand \defeq {\stackrel{\tiny{\rm def}}{=}}

\newcommand \tname[1] {(\textsc{#1})}
\newcommand \trule[4] {\begin{array}{c|c} {#1} & {#2} \\ \hline {#3} & {#4} \end{array}}
\newcommand \comsps[1] {\textbf{#1}}

\DeclareMathOperator* \bin {{\sf bin}}
\DeclareMathOperator* \dec {{\sf dec}}  
 
\DeclareMathOperator* \cons  {{\sf Cons}}
\DeclareMathOperator* \cl  {{\sf Cl}}
\DeclareMathOperator* \sub  {{\sf Sub}}
\DeclareMathOperator* \slen  {{\sf slen}}
\DeclareMathOperator* \len  {{\sf len}}

\DeclareMathOperator* \dom  {{\sf dom}}
\DeclareMathOperator* \ran  {{\sf ran}}

\iftrue
\newcommand{\wl}[1]{\todo[inline,color=red!10,caption={WL}]{\textbf{Liu:} #1}}
\newcommand{\fs}[1]{\todo[inline,color=blue!10,caption={FS}]{\textbf{Fu:} #1}}
\newcommand{\yt}[1]{\todo[inline,color=green!10,caption={YT}]{\textbf{Ye:} #1}}
\else
\newcommand{\wl}[1]{}
\newcommand{\fs}[1]{}
\newcommand{\yt}[1]{}
\fi

\begin{document}

\title{An Automata-Theoretic Approach to Synthesizing Binarized Neural Networks}

\author{
    Ye Tao\inst{1}\orcidID{0009-0007-1478-9144}\and
    Wanwei Liu\inst{1}\thanks{Corresponding Author}\orcidID{0000-0002-2315-1704}
    \and
    Fu Song \inst{2,3,4}\orcidID{0000-0002-0581-2679}\and
    Zhen Liang \inst{5} \orcidID{0000-0002-1171-7061}\and
    Ji Wang \inst{5} \orcidID{0000-0003-0637-8744}\and
    Hongxu Zhu \inst{1}
}
\authorrunning{Y. Tao et al.}
%
            
\institute{
    College of Computer Science and Technology, National University of Defense Technology\\
    \email{\{taoye0117,wwliu,zhuhongxu\}@nudt.edu.cn} \and
   School of Information Science and Technology, ShanghaiTech University\\
    \email{songfu@shanghaitech.edu.cn} \and
Institute of Software, Chinese Academy of Sciences \& University of Chinese Academy of
Sciences  \and
Automotive Software Innovation Center\and
   Institute for Quantum Information \& State Key Laboratory for High Performance Computing, National University of Defense Technology \\
    \email{\{liangzhen,wj\}@nudt.edu.cn}\\
}

\maketitle
    \begin{abstract}    
    Deep neural networks, ({\dnn}s, a.k.a. {\nn}s), have been widely used in various tasks and have been proven to be successful. However, the accompanied expensive computing and storage costs make the deployments in resource-constrained devices a significant concern. To solve this issue, quantization has emerged as an effective way to reduce the costs of {\dnn}s with little accuracy degradation by quantizing floating-point numbers to low-width fixed-point representations. Quantized neural networks ({\qnn}s) have been developed, with binarized neural networks ({\bnn}s) restricted to binary values as a special case. Another concern about neural networks is their vulnerability and lack of interpretability.  Despite the active research on trustworthy of {\dnn}s, few approaches have been proposed to {\qnn}s. To this end, this paper presents an automata-theoretic approach to synthesizing {\bnn}s that meet designated properties. More specifically, we define a temporal logic, called \bltl, as the specification language. We show that each \bltl formula can be transformed into an automaton on finite words. To deal with the state-explosion problem, we provide a tableau-based approach in real implementation.
    For the synthesis procedure, we utilize \smt solvers to detect the existence of a model (i.e., a \bnn)
    in the construction process.
    Notably, synthesis provides a way to determine the hyper-parameters of the network before training. Moreover, we experimentally evaluate our approach and demonstrate its effectiveness in improving the individual fairness and local robustness of {\bnn}s while maintaining accuracy to a great extent.
    \end{abstract}
%
%
%
    
	
	\section{Introduction}
	\label{Sec: Intro}

    Deep Neural Networks ({\dnn}s) are increasingly used in a variety of applications,
    from image recognition to autonomous driving, due to their high accuracy in classification and prediction tasks \cite{simonyan2014very,tesla}.
    However, two critical challenges emerge, high-cost and a lack of trustworthiness, that impede their further development.

    On the one hand, a modern \dnn typically contains a large number of parameters which are typically stored as $32$-bit floating-point numbers (e.g., \gpt contains about 100 trillion parameters~\cite{chatgpt}), thus
    an inference often demands more than a billion floating-point operations.
    As a result, deploying a modern \dnn requires huge computing and storage resources,
    thus it is challenging for resource-constrained embedding devices.
    To tackle this issue, quantization has been introduced, which compresses a network by converting floating-point numbers to low-width fixed-point representations,
    so that it can significantly reduce both memory and computing costs using fixed-point arithmetic with a relatively small side-effect on the network’s accuracy~\cite{nagel2021white}.

    On the other hand, neural networks are known to be vulnerable to input perturbations, namely, slight input disturbance may dramatically change their output \cite{eykholt2018robust,BuZDS22,SongLCFL21,ChenCFDZSL21,ZCWYSS21,ChenZZS23,chen2022towards,CZSCFL22}.
    In addition, {\nn}s are often treated as black box~\cite{huang2020survey},
    and we are truly dearth of understanding of the decision-making process inside the ``box''. As a result, a natural concern is whether {\nn}s can be trustworthy,
    especially in some safety-critical scenarios,
    where erroneous behaviors might lead to serious consequences.
    One promising way to tackle this problem is formal verification,
    which defines properties that we expect the network to satisfy and rigorously checks whether the network meets our expectations.
    Numerous verification approaches have been proposed recently aiming at this purpose \cite{huang2020survey}. Nevertheless, these approaches in general ignore rounding errors in quantized computations, making them unable to apply for quantized neural networks ({\qnn}s).
    It has been demonstrated that specifications that hold for a floating-point numbered {\dnn} may not necessarily hold after quantizing the inputs and/or parameters of the {\dnn} \cite{BuZDS22,giacobbe2020many}.
    For instance, a {\dnn} that is robust to given input perturbations might become non-robust after quantization.
    Compared to \dnn verification~\cite{huang2020survey,li2019analyzing,liu2021algorithms,liu2020verifying,10.1007/978-3-031-35257-7_15,ZhaoZCSCL22,GuoWZZSW21}, verifying \qnn is truly a more challenging and less explored problem.
    Evidences show that the verification problem for {\qnn}s is harder than {\dnn}s \cite{henzinger2021scalable},
    and only few works are specialized for verifying {\qnn}s~\cite{baluta2019quantitative,cheng2018verification,giacobbe2020many,henzinger2021scalable,narodytska2018verifying,shih2019verifying,zhang2021bdd4bnn,ZZCSC23,zhang2022qvip,ZhangSS23}.


     In this paper, we concentrate on {\bnn}s (i.e., binarized neural networks), a special type of {\qnn}.
     Although formal verification has been the primary explored approach to verifying (quantized) neural networks,
     we pursue another promising line, synthesizing the expected binarized neural networks directly.
     In other words, we aim to construct a neural network that satisfies the expected properties we specify, rather than verifying an existing network's compliance with those properties. 
    To achieve this, we first propose, \bltl, an extension of \ltlf (namely, \ltl defined on finite words),
    as the specification language. This logic can conveniently describe data-related properties of {\bnn}s.
    We then provide an approach to converting a \bltl formula to  an equivalent automaton.
    The synthesis task is then boiled down to find a path from an initial state to an accepting state in the automaton.

    Unfortunately, such a method suffers from the state-exploration problem.
    To mitigate this issue, we observe
    that it is not necessary to synthesize the entire \bnn since the desired properties are only related to some specific hyper-parameters of the network.
    To this end, we propose a tableau-based approach:
    To judge whether a path is successfully detected, we check the satisfiability of the associated \bltl formulas, and convert the problem into an \idl-solving problem, which can be efficiently solved.
    Besides, we prove the existence of a tracing-back threshold, which allows us to do backtracking earlier to avoid doing trace searching that is unlikely to lead to a solution.
    The solution given by the solver provides the hyper-parameters of the \bnn, including the length of the network and crucial input-output relations of blocks. Afterwards, one can perform a block-wise training to obtain a desired \bnn.

    We implement a prototype synthesizing tool and evaluate our approach on local robustness and individual fairness.
    The experiments demonstrate that our approach can effectively improve the network's reliability compared to the baseline, especially for individual fairness.

    The main contributions of this work are summarized as follows:
    \begin{itemize}
        \item We present a new temporal logic, called \bltl, for describing properties of {\bnn}s, 
                and provide an approach to transforming \bltl formulas into equivalent finite-state automata.
        \item We propose an automata-theoretic synthesis approach that determines the hyper-parameters of a \bnn model before training.
        \item We implement a prototype synthesis tool and evaluate the effectiveness on two concerning properties, demonstrating the feasibility of our method.
    \end{itemize}

 \noindent {\bf Related Work.}
     For {\bnn}s, several verification approaches have been proposed. Earlier work reduces the {\bnn} verification problem to hardware verification (i.e., verifying combinatorial circuits),
     for which \sat solvers are harnessed~\cite{cheng2018verification}.
     Following this line, \cite{narodytska2018verifying} proposes a direct encoding from the \bnn verification problem into the \sat problem. 
    \cite{narodytska2020search} studies 
    the effect of \bnn architectures on the performance of \sat solvers and 
     uses this information to train \sat-friendly {\bnn}s.
     \cite{baluta2019quantitative} provides a framework for approximately quantitative verification of {\bnn}s with \pac-style guarantees via approximate \sat model counting.
     Another line of \bnn verification encodes a \bnn and its input region 
     into a binary decision diagram (\bdd), 
     and then one can verify some properties of the network by analyzing {\bdd}.
     \cite{shih2019verifying} proposes an Angluin-style learning algorithm to compile a \bnn on a given input region into a \bdd, and utilize a
     \sat solver as an equivalence oracle to query.
    \cite{zhang2021bdd4bnn} has developed a more efficient \bdd-based quantitative verification framework by exploiting the internal structure of {\bnn}s.
     Few work has been dedicated to \qnn verification so far.
     \cite{giacobbe2020many}
     shows that the properties guaranteed by the {\dnn} are not preserved after quantization.
     To resolve this issue, they introduce an approach to verifying {\qnn}s by using \smt solvers in bit-vector theory.
     Later, \cite{henzinger2021scalable} proves that verifying {\qnn} with bit-vector specifications is \pspace-Hard.
     More recently, \cite{zhang2022qvip,ZhangSS23} reduce the verification problem into integer linear constraint solving which are significantly more efficient than the SMT-based one.

 \smallskip
 \noindent {\bf Outline.} 
 The rest of the paper is organized as follows:
    In Section \ref{Sec: Prelim}, we introduce preliminaries. We present the specification language \bltl in Section \ref{Sec: BLTL}.
    In Section \ref{Sec: Automata_Cons}, we show how to translate
    a \bltl formula into an equivalent automaton, which is the basic
    of tableau-based approach for synthesis, and
    technical details are given in Section \ref{Sec: Synthesis}.
    The proposed approach is implemented and evaluated in Section \ref{Sec:Expr_Eval}.
    We conclude the paper in Section \ref{Sec: Con}.

\section{Preliminaries}
	\label{Sec: Prelim}
    We denote by $\domR$, $\domN$, and $\domB$ the set of real numbers, natural numbers, and Boolean domain $\{0,1\}$, respectively.
    We use $\domR^n$ and $\domB^n$ to denote the set of real number vectors and binary vectors with $n$ elements, respectively.
    For $n \in \domN$, let $[n]$ be the set $\{0, 1,2,\dots,n-1\}$.
    We will interchangeably
	use the terminologies $0$-$1$ vector and binary vector in this paper.
    For a binary vector $\bm{b}$, we use $\dec(\bm{b})$ to denote its corresponding decimal number, and conversely let $\bin(d)$ be the corresponding binary vector which encodes the
	number $d$. For example, let $\bm{b}=(0,1,1)^\T$, then we have $\dec(\bm{b})=3$. Note that $\bin(\dec(\bm{b}))=\bm{b}$ and $\dec(\bin(d))=d$.
	For two binary vectors $\bm{a}=\left(a_0,\ldots,a_{n-1}\right)^\T$ and $\bm{b}=\left(b_0,\ldots,b_{n-1}\right)^\T$
	with the same length, we denote by $\bm{a}\sim\bm{b}$ if $a_i\sim b_i$ for all $i\in [n]$, otherwise $\bm{a}\not\sim\bm{b}$,
        where $\sim\in\{>, \geq, <, \leq, =\}$.
        Note that $\bm{a}\neq\bm{b}$
        if $a_i\neq b_i$ for some $i\in [n]$.

    A (vectorized) Boolean function takes a $0$-$1$ vector as input and returns another $0$-$1$ vector.
    Hence, it is essentially a mapping from integers to integers when each $0$-$1$ vector $\bm{b}$ is viewed as an integer $\dec(\bm{b})$. We denote by $\bm{I}_n$ the identity function such that $\bm{I}_n \left(\bm{b}\right) = \bm{b}$,
    for any $\bm{b} \in \domB^n$, where the subscript $n$ may be dropped when it is clear from the context.
    We use \emph{composition} operation $\circ$ to represent the function composition among Boolean functions.

    A \emph{binarized neural network} (\bnn) is a feed-forward neural network, 
      composed of several internal blocks and one output block~\cite{shih2019verifying,zhang2021bdd4bnn}.
    Each internal block is comprised of 3 layers and can be viewed as a mapping $f: \{-1, 1\}^n \to \{-1, 1\}^m$.
    Slightly different from internal blocks,
    the output block outputs the classification label to which the highest activation corresponds,
    thus, can be seen as a mapping
    ${\sf out}: \{-1, 1\}^n \to {\domR}^p$, where $p$ is the number of classification labels of the network.

    Since the binary values $-1$ and $+1$ can be represented as their Boolean counterparts $0$ and $1$ respectively,
    each internal block can be viewed as a Boolean function $f: \domB^n \to \domB^m$~\cite{zhang2021bdd4bnn}.
    Therefore, ignoring the slight difference in the output block,
    an $n$-block \bnn\ $\network{N}$ can be encoded via a series of Boolean functions
	$f_i: \domB^{\ell_i}\to\domB^{\ell_{i+1}}$ ($i=0, 1, \ldots, n-1$),
        and $\network{N}$ works as the combination
	of these Boolean functions, namely, it corresponds to the function,
	\[
	f_\network{N} = f_{n-1}\circ f_{n-2}\circ\cdots\circ f_1\circ f_0.
	\]

    \emph{Integer difference logic} (\idl) is a fragment of linear integer arithmetic, in which  atomic formulas must be of the form $x - y \sim c$
    where $x$ and $y$ are integer variables, and $c$ is an integer constant, $\sim \in \{\le,\ge,<,>,=, \neq\}$.
    All these atomic formulas 
    can be transformed into constraints of the form $x - y \leq c$~\cite{barrett2010smt}.
    For example, $x - y = c$ can be transformed into $x - y \leq c \wedge x - y \geq c$.

    The task of an \idl-problem is to check the satisfiability of an \idl formula in conjunctive normal form (\cnf)
    \[(x_1 - y_1 \le c_1) \wedge \dots \wedge (x_n - y_n \le c_n),\]
    which can be in general converted into the cycle detection problem in a weighted, directed graph with $O(n)$ nodes and $O(n)$ edges, and solved by e.g., Bellman-Ford or Dijkstra’s algorithm, in $O(n^2)$ time~\cite{idl}. 

	\section{The Temporal Logic \bltl}
	\label{Sec: BLTL}

    \subsection{Syntax and Semantics of \bltl}
    \label{SubSec: SandS}

	Let us fix a signature $\bm{\Sigma}$, consisting of a set of desired Boolean functions and $0$-$1$ vectors.
	Particularly, let $\bm{\Sigma}_{\V}$ be the subset of $\bm{\Sigma}$  containing only $0$-$1$ vectors.

	Terms of \bltl are described via BNF as follows:
	\[
	\bm{t} \Coloneqq \bm{b} \mid f \left(\bm{t}\right) \mid  \opN^k\bm{t}
	\]
	where $\bm{b}\in \bm{\Sigma_\V}$ is a $0$-$1$ vector, called \emph{vector constant},
	 $f\in\bm{\Sigma}\setminus \bm{\Sigma_\V}$ is a Boolean function, and $k\in\domN$ is a constant, and $\opN^{k}$ in $\opN^k\bm{t}$ denotes
$k$ placeholders for $k$ consecutive blocks of a \bnn (i.e., $k$ Boolean functions) to be applied onto the term $\bm{t}$. We remark that $\opN^0\bm{t}=\bm{t}$.

	
	\bltl formulas are given via the following  grammar:
	\[
	\psi \Coloneqq \top\mid \bm{t}\sim\bm{t} \mid \neg\psi \mid \psi\vee\psi \mid \opX\psi \mid \psi\opU\psi  
	\]
 where $\sim \in \{\le,\ge,<,>,=\}$, $\opX$ is the \emph{Next} operator and $\opU$ is the \emph{Until} operator.

	We define the following derived Boolean operators, quantifiers with finite domain, and temporal operators:
 \[\begin{array}{ccc}
  \psi_1\wedge\psi_2\defeq\neg(\neg\psi_1\vee \neg\psi_2) &  \opF\psi\defeq\top\opU\psi   & \opG\psi\defeq\neg\opF\neg\psi \\
\psi_1\rightarrow\psi_2  \defeq (\neg \psi_1) \vee \psi_2 & \psi_1\opR\psi_2\defeq\neg(\neg\psi_1\opU\neg\psi_2)   & \opXd\psi\defeq\neg\opX\neg\psi \\
\forall \bm{x}\in \domB^k.\psi\defeq \bigwedge_{\bm{b}\in \domB^k\cap\bm{\Sigma_\V}}\psi[\bm{x}/\bm{b}] & \qquad \exists \bm{x}\in \domB^k.\psi \defeq \neg\forall \bm{x}\in \domB^k.\neg \psi
 \end{array}\]
 where $\psi[\bm{x}/\bm{b}]$ denotes the \bltl formula obtained from $\psi$ by replacing each occurrence of $\bm{x}$ with $\bm{b}$.

 The semantics of \bltl formulas is defined w.r.t. a \bnn $\network{N}$ given by
  the composition of Boolean functions $f_\network{N} = f_{n-1}\circ f_{n-2}\circ\cdots\circ f_1\circ f_0$, and a position $i\in\domN$.
	We first define the semantics of terms, which is given by the function $\semantics{\bullet}_{\network{N},i}$, inductively:
	\begin{itemize}
		\item $\semantics{\bm{b}}_{\network{N},i}=\bm{b}$ for each vector constant $\bm{b}$;
 \item $\semantics{f\left(\bm{t}\right)}_{\network{N},i}=f
            \left(\semantics{\bm{t}}_{\network{N},i}\right)$;

\item $\semantics{\opN^k\bm{t}}_{\network{N},i}= 
\left\{ \begin{array}{ll}
   (f_{i+\slen(\bm{t})+k-1}\circ \cdots \circ f_{i+\slen(\bm{t})})(\semantics{\bm{t}}_{\network{N},i}),  & \mbox{if } k\geq 1; \\
  \semantics{\bm{t}}_{\network{N},i},   &  \mbox{if } k=0;
\end{array}\right.$\\

where $f_i$ is the identity Boolean function $\bm{I}$ if $i\geq n$,  $\slen(\bm{b})=0$, $\slen(f(\bm{t}))=\slen(\bm{t})+1$
 and $\slen(\opN^k\bm{t})=\slen(\bm{t})+k$.
	\end{itemize}

 Note that we assume the widths of Boolean functions and their argument vectors are compatible.
	
	\begin{proposition}
		\label{Lem:OpN_Power}
	We have: $\semantics{\opN^k\opN^{k'}\bm{t}}_{\network{N},i}=
		\semantics{\opN^{k+k'}\bm{t}}_{\network{N},i}$.	  	
  \end{proposition}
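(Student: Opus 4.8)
The plan is to unfold the semantic clause for $\opN^{k}\bullet$ twice and verify that the two resulting compositions of consecutive Boolean functions coincide; the whole content is index bookkeeping, which is governed by the syntactic length function $\slen$. The one auxiliary fact I would record first is $\slen(\opN^{k'}\bm{t}) = \slen(\bm{t}) + k'$, which is immediate from the defining clause $\slen(\opN^{k}\bm{t}) = \slen(\bm{t}) + k$, and which will be used to compute the offset at which the outer $\opN^{k}$ starts applying functions.

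I would begin by disposing of the degenerate cases. If $k = 0$, then $\opN^{0}\opN^{k'}\bm{t} = \opN^{k'}\bm{t}$ and $k + k' = k'$, so both sides are literally $\semantics{\opN^{k'}\bm{t}}_{\network{N},i}$; symmetrically, the claim is trivial if $k' = 0$, using $\opN^{0}\bm{t} = \bm{t}$ together with the $k=0$ branch of the semantics. So assume $k, k' \geq 1$, hence also $k + k' \geq 1$, and all three occurrences of $\opN$ are evaluated via the first ($\geq 1$) branch. Writing $\bm{t}' \defeq \opN^{k'}\bm{t}$ with $\slen(\bm{t}') = \slen(\bm{t}) + k'$, the outer application gives
\[
\semantics{\opN^{k}\bm{t}'}_{\network{N},i} = \bigl(f_{i+\slen(\bm{t})+k'+k-1}\circ\cdots\circ f_{i+\slen(\bm{t})+k'}\bigr)\bigl(\semantics{\opN^{k'}\bm{t}}_{\network{N},i}\bigr),
\]
and expanding $\semantics{\opN^{k'}\bm{t}}_{\network{N},i} = \bigl(f_{i+\slen(\bm{t})+k'-1}\circ\cdots\circ f_{i+\slen(\bm{t})}\bigr)(\semantics{\bm{t}}_{\network{N},i})$ and concatenating the two index segments yields $\bigl(f_{i+\slen(\bm{t})+(k+k')-1}\circ\cdots\circ f_{i+\slen(\bm{t})}\bigr)(\semantics{\bm{t}}_{\network{N},i})$, which is exactly $\semantics{\opN^{k+k'}\bm{t}}_{\network{N},i}$.

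There is no genuinely hard step — the statement is a routine unfolding — but the one place to be careful is checking that the two function segments abut without gap or overlap: the inner segment runs over subscripts $i+\slen(\bm{t})$ through $i+\slen(\bm{t})+k'-1$, and the outer one continues from $i+\slen(\bm{t})+k'$ through $i+\slen(\bm{t})+k'+k-1$, so their composite sweeps $i+\slen(\bm{t})$ through $i+\slen(\bm{t})+(k+k')-1$ precisely as the right-hand side requires. This alignment is exactly what the $\slen$ offset in the semantic clause is designed to produce, and the padding convention $f_{i} = \bm{I}$ for $i \geq n$ causes no trouble since it is applied consistently to the matching subscripts on both sides.
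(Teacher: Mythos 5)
Your proof is correct and is exactly the routine definition-unfolding the paper intends: the paper states Proposition~\ref{Lem:OpN_Power} without proof, treating it as immediate from the semantic clause for $\opN^k$ and the identity $\slen(\opN^{k'}\bm{t})=\slen(\bm{t})+k'$, which is precisely the index bookkeeping you carry out. Your handling of the $k=0$ and $k'=0$ branches and the check that the two function segments abut are the only points of substance, and both are done correctly.
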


	Subsequently, the semantics of \bltl formulas is characterized via the \emph{satisfaction} relation $\models$, inductively:
	\begin{itemize}
		\item $\network{N}, i\models \top$ always holds;
		\item $\network{N}, i\models\bm{t}_1\sim\bm{t}_2$ iff $\semantics{\bm{t}_1}_{\network{N},i}\sim\semantics{\bm{t}_2}_{\network{N},i}$;
		\item $\network{N}, i\models\neg\varphi$ iff $\network{N},i\not\models\varphi$;
		\item $\network{N}, i\models\varphi_1\vee\varphi_2$ iff $\network{N},i\models\varphi_1$ or $\network{N},i\models\varphi_1$;
		\item $\network{N}, i\models\opX\psi$ iff $i<n-1$ and $\network{N}, i+1\models\psi$;
		\item $\network{N}, i\models\psi_1\opU\psi_2$ iff there is $j$ such that $i\leq j< n$, $\network{N},j\models\psi_2$
		and $\network{N},k\models\psi_1$ for each $i\leq k<j$;
	\end{itemize}
	
	We may write $\network{N}\models\psi$ in the case of $i=0$.
	In the sequel, we denote by $\lang{\psi}$ the set of {\bnn}s $\{\network{N}\mid \network{N}\models\varphi\}$ for each formula $\varphi$,
 and denote by $\psi_1\equiv\psi_2$ if $\network{N},i\models \psi_1
		\Leftrightarrow \network{N},i\models \psi_2$ for every \bnn $\network{N}$ and $i$.
	
	%
	\begin{proposition}
		\label{Lem:BLTL_Equiv}
		The following statements hold:
		\begin{enumerate}
			\item $\opG\psi\equiv\bot\opR\psi$;
			\item $\opF\psi\equiv\psi\vee\opX\opF\psi$;
			\item $\opG\psi\equiv\psi\wedge\opXd\opG\psi$;
			\item $\psi_1\opU\psi_2\equiv \psi_2\vee(\psi_1\wedge\opX(\psi_1\opU\psi_2))$;
			\item $\psi_1\opR\psi_2\equiv\psi_2\wedge(\psi_1\vee\opXd(\psi_1\opR\psi_2))$.
		\end{enumerate}
	\end{proposition}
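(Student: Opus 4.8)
Fix a \bnn $\network{N}$ with $n$ blocks; every asserted equivalence amounts to checking that its two sides have the same truth value at each position $i\in[n]$. The plan is to first extract, by a single pass of unfolding the derived-operator abbreviations against the base clauses for $\neg$, $\opX$ and $\opU$, the following semantic readings: $\network{N},i\models\opF\psi$ iff $\network{N},j\models\psi$ for some $j$ with $i\le j<n$; $\network{N},i\models\opG\psi$ iff $\network{N},j\models\psi$ for all such $j$; $\network{N},i\models\opXd\psi$ iff $i=n-1$ or $\network{N},i+1\models\psi$ (the ``weak next''); and $\network{N},i\models\psi_1\opR\psi_2$ iff for every $j$ with $i\le j<n$ we have $\network{N},j\models\psi_2$ or else $\network{N},k\models\psi_1$ for some $k$ with $i\le k<j$. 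Each reading is a one-line calculation (using that $\bot$ abbreviates $\neg\top$ together with double negation), and these are really all we need.

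With the readings in hand the five items are routine. Item~(1) is immediate, since in the expansion of $\bot\opR\psi$ the side condition requiring $\neg\bot$ to hold at all intermediate positions is vacuously true, so both $\bot\opR\psi$ and $\opG\psi$ reduce to ``$\network{N},j\models\psi$ for all $j\in\{i,\dots,n-1\}$''. For~(2) and~(4) I would prove both inclusions by splitting the $\opF$/$\opU$ witness $j$ into the cases $j=i$ and $j>i$: the former yields the disjunct $\psi$ (resp.\ $\psi_2$), and the latter yields $\opX$ applied to the same formula at position $i+1$, after checking that the guard $i<n-1$ in the $\opX$-clause is automatically met because $i<j<n$; the converse direction simply re-uses the witness. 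Items~(3) and~(5) can be handled the same way from the $\opG$/$\opR$ readings, or more economically by negating~(2)/(4) and driving the negation inward with De~Morgan, double negation, the identity $\opXd\chi\equiv\neg\opX\neg\chi$, and the (trivially verified) fact that $\equiv$ is a congruence for $\neg,\vee,\wedge,\opX$.

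The only genuinely delicate point — and the reason the weak operator $\opXd$ appears in~(3) and~(5) while the strong $\opX$ suffices in~(2) and~(4) — is the last position $i=n-1$, where $\opX\theta$ is false for every $\theta$. There the recursive $\opX$-disjunct in~(2)/(4) must disappear, and the equivalence survives precisely because the $\opF$/$\opU$ witness is then forced to be $j=i$; dually, in~(3)/(5) the conjunct $\opXd(\cdots)$ must be \emph{true} at $i=n-1$, which is exactly what the weak-next clause delivers, so the right-hand side collapses to $\psi$ (resp.\ $\psi_2$) and matches the left-hand side. For $i<n-1$ the $\opX$/$\opXd$ clauses behave as in infinite-trace LTL and the classical expansion arguments carry over unchanged; the finite-trace bookkeeping at the boundary is where I expect the actual work to be.
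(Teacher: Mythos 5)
The paper states this proposition without proof (it is offered as a list of standard \ltlf expansion laws), so there is no in-paper argument to compare against; on its own terms your proof is correct and is exactly the expected argument: read off the quantifier semantics of $\opF$, $\opG$, $\opXd$, $\opR$ from the abbreviations, split the $\opU$/$\opF$ witness into $j=i$ versus $j>i$, and let the strong/weak next distinction absorb the last position. Your emphasis on the final position as the only delicate point is the right one.

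One caveat you should make explicit rather than adopt silently. You fix the range of positions to $i\in[n]$, but the paper defines $\psi_1\equiv\psi_2$ by quantifying over \emph{every} $i$, and its semantics assigns truth values at positions $i\ge n$ as well: there every $\opU$-formula is false (no witness $j$ with $i\le j<n$ exists), every $\opR$-formula is vacuously true, $\opX\theta$ is false and $\opXd\theta$ is true. Under that literal reading items (2)--(5) fail beyond the trace; for instance $\network{N},n\not\models\opF\top$ while $\network{N},n\models\top\vee\opX\opF\top$, and $\network{N},n\models\opG\psi$ holds vacuously while $\psi\wedge\opXd\opG\psi$ additionally requires $\network{N},n\models\psi$. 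So restricting to positions inside the trace is not a convenience but is what makes the claimed equivalences true; this is evidently the authors' intent (the automaton and tableau constructions only invoke these laws at positions of the run), but a complete proof should state the restriction and note why it is needed.
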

	
	For a \bltl formula $\varphi$ and a \bnn $\network{N}$, the \emph{model checking} problem w.r.t.
	$\varphi$ and $\network{N}$ is to decide whether $\network{N}\models\varphi$ holds.
	
	With the above derived operators, together with the patterns $\neg\neg\psi\equiv\psi$
 and $\neg(\bm{t}_1\sim\bm{t}_2)\equiv\bm{t}_1\not\sim\bm{t}_2$, \bltl formulas can be transformed into \emph{negation normal form} (\nnf) by pushing the negations ($\neg$) inward, till no the negations are involved.
 
	Given two sets of formulas $\Gamma$ and $\Gamma'$ in \nnf, 
we say that   $\Gamma'$ is a  \emph{proper closure} of $\Gamma$, if the following conditions hold:
	\begin{itemize}
		\item $\Gamma\subseteq\Gamma'$.
		\item $\psi_1\wedge\psi_2\in\Gamma'$ implies that both $\psi_1\in\Gamma'$ and
		$\psi_2\in\Gamma'$.
		\item $\psi_1\vee\psi_2\in\Gamma'$ implies that either $\psi_1\in\Gamma'$ or
		$\psi_2\in\Gamma'$.
		\item $\psi_1\opU\psi_2\in\Gamma'$ implies $\psi_2\vee(\psi_1\wedge\opX(\psi_1\opU\psi_2))\in\Gamma'$.
		\item $\psi_1\opR\psi_2\in\Gamma'$ implies $\psi_2\wedge(\psi_1\vee\opXd(\psi_1\opR\psi_2))\in\Gamma'$.
	\end{itemize}

	We denote by $\cl(\Gamma)$ the set consisting of all proper closures of $\Gamma$ (note that $\cl(\Gamma)$ is a family of formula sets.)
	We also denote by $\sub(\psi)$ the set of the subformulas of $\psi$ except that

 \begin{itemize}
     \item if $\psi_1\opU\psi_2\in \sub(\psi)$, then
$\psi_2\vee(\psi_1\wedge\opX(\psi_1\opU\psi_2))\in\sub(\psi)$;
    \item if $\psi_1\opR\psi_2\in \sub(\psi)$, then $\psi_2\wedge(\psi_1\vee\opXd(\psi_1\opR\psi_2))\in\sub(\psi)$.
 \end{itemize}


    \subsection{Illustrating Properties Expressed by \bltl}
    \label{SubSec: Example_Props}
    In this section, we demonstrate the expressiveness of \bltl.
    Since \bltl has the ability to express Boolean logic and arithmetic operations, we can see that many concerning properties can be specified using \bltl.

    We can partition a vector into segments of varying widths, 
    and then define a Boolean function, denoted by $e_i$, to extract the $i$-th segment with width of $n$,
    namely, $e_i:\domB^m \rightarrow \domB^n$, where $m$ is the width of vector $\bm{b}$. 
    We use $\bm{b}[i]$ to refer to $e_i\left(\bm{b}\right)$
    in the case that $e_i\left(\bm{b}\right) \in \domB$.
    \paragraph{Local Robustness.}
    Given a \bnn $\network{N}$ and a $n$-width input $\bm{u}$,
    $\network{N}$ is robust w.r.t. $\bm{u}$,
    if all inputs in the region $B\left(\bm{u},\epsilon \right)$, are classified into the same class as $\bm{u}$ \cite{baluta2019quantitative}.
    Here, we consider $B\left(\bm{u}, \epsilon \right)$ as the set of vectors that differ from $\bm{u}$ in at most $\epsilon$ positions, where $\epsilon$ is the maximum number of positions at which the values differ from those of $\bm{u}$.
    The local robustness can be described as follows:
     \[
        \forall \bm{x} \in \domB^n.
        \sum_{i=1}^{|\bm{u}|} \left(\bm{x}[i] \oplus \bm{u}[i] \right) \le \epsilon
        \rightarrow
        \network{N}\left(\bm{x}\right) = \network{N} \left(\bm{u}\right)
     \]
    \paragraph{Individual Fairness.}
    In the context of a \bnn $\network{N}$ with an input of $t$ attributes and $n$-width,
    where the $s$-th attribute is considered sensitive,
    $\network{N}$ is fair w.r.t the $s$-th attribute, 
    when no two input vectors in its domain differ only in the value of the $s$-th attribute and yield different outputs \cite{zhang2020white,zheng2022neuronfair}.
    The individual fairness can be formulated as:
    \begin{align*}
        \forall \bm{a},\bm{b} \in \domB^n.
        \left(
            \neg \left(e_s(\bm{a}) = e_s(\bm{b})\right) \wedge
            \forall i \in [t] - \{s\}. e_i(\bm{a}) = e_i(\bm{b})
        \right)
        \rightarrow \network{N} \left( \bm{a} \right) = \network{N} \left( \bm{b} \right)
    \end{align*}
    where $e_i$ denotes the extraction of the $i$-th attribute, $\domB^n$ is the domain of $\network{N}$, and $\bm{a}$, $\bm{b}$ are input vectors.

    In practice, 
    it is possible to select inputs in the $\domB^n$,
    and modify the sensitive attribute to obtain the proper pairs, 
    which only differ in the sensitive attribute.
    For any such pair  $(\bm{b}, \bm{b}')$,we formulate the specification as
    $\network{N}(\bm{b}) = \network{N}(\bm{b}')$.
        
    \paragraph{{Specification for Internal Blocks.}}
    \bltl can specify block-level properties.
    For instance, the formula 
    \[
        \forall \bm{x} \in \domB^4. \opF \left(\bm{x} \ge \bm{a} \rightarrow \opN \bm{x} = \bm{a} \right)  
    \]
    states that there exists a block in the network that behaves as follows:
    for any $4$-bit input whose value is greater than or equal to $\bm{a}$, 
    the corresponding output is equal to $\bm{a}$.
	
	\section{From \bltl to Automata}
	\label{Sec: Automata_Cons}
	
	In this section, we present both an explicit and an implicit construction that
    translate a \bltl formula into an equivalent finite-state automaton.
	We first show how to eliminate the placeholders $\opN^k$ in terms $\opN^k\bm{t}$ and atomic formulas $\bm{t}_1\sim\bm{t}_2$.

    \subsection{Eliminating Placeholders}
    \label{SubSec: Cons}
    To eliminate the placeholders $\opN^k$ in terms $\opN^k\bm{t}$,
    we define the \emph{apply operator} $[~]:
     \bm{T} \times \bm{\Sigma}\setminus \bm{\Sigma_\V} \rightarrow \bm{T}$,
    where $\bm{T}$ denotes the set of terms.
  $[\bm{t},f]$, written as $\bm{t}[f]$, is called the \emph{application} of the term $\bm{t}$ w.r.t. the Boolean function $f \in \bm{\Sigma}$, which instantiates the innermost placeholder of the term $\bm{t}$
    by the Boolean function $f$. Below, we give a formal description of the application.
    
    Let us fix a term $\bm{t}$.
	According to Proposition \ref{Lem:OpN_Power},  $\bm{t}$ can be equivalently transformed into
	the following canonical form
	\[
	\opN^{\ell_k}g_{k-1}\left(\opN^{\ell_{k-1}}g_{k-2}\left(\cdots        g_{0}\left(\opN^{\ell_0}
	\bm{b}\right)\cdots\right)\right)
	\]
	where $\bm{b}$ is a vector constant, $\ell_0\geq0$ and  $\ell_i > 0$ for each $i>0$.
    Hereafter, we assume that $\bm{t}$ is in the canonical form, and let $\len(\bm{t})=\sum_{i=0}^k\ell_i$.

    When $\bm{t}$ is $\rhd$-free, i.e., $\len(\bm{t})=0$, we let $\bm{t}[f]=\bm{t}$. 
    When $\len(\bm{t})>0$,  we say that the Boolean function $f\in\bm{\Sigma}$ is \emph{applicable}
	w.r.t. the term $\bm{t}$, if:
	\begin{enumerate}
		\item $\bm{b}\in\dom {f}$;
		\item if $\ell_0=1$, then  $\ran{f}=\dom{g_0}$. 
	\end{enumerate}
     Intuitively, the above two conditions ensure that 
     $f\left(\bm{b}\right)$ and $g_{0}\circ f$ are well-defined.
   
   If  $f\in\bm{\Sigma}$ is applicable
	w.r.t. the term $\bm{t}$, we let $\bm{t}[f]$ be the term:
	\[
	\bm{t}[f]=\begin{cases}
		\opN^{\ell_k}g_{k-1}\left(\opN^{\ell_{k-1}}g_{k-2}\left(\cdots g_{0}\left(\opN^{\ell_0-1}
		\bm{b}'\right)\cdots\right)\right), & \mbox{if } \ell_0>1 \\
		\opN^{\ell_k}g_{k-1}\left(\opN^{\ell_{k-1}}g_{k-2}\left(\cdots g_{1}\left(\opN^{\ell_{1}}
		\bm{b}''\right)\cdots\right)\right), & \mbox{if } \ell_0=1
	\end{cases}
	\]
	where $\bm{b}'= f\left(\bm{b}\right)$ and $\bm{b}''=\left(g_{0}\circ f \right) \left(\bm{b}\right)$.
   
    It can be seen that $\len(\bm{t}[f])=\len(\bm{t})-1$. By iteratively applying this operator,
    the placeholders $\opN^k$ in the term $\bm{t}$ can be eliminated.
	%
	%
	For convenience, we write $\bm{t}[f_0,f_1,\ldots,f_i]$ for the shorthand of
	$$\bm{t}[f_0][f_1]\cdots[f_i],$$
	provided that each Boolean function $f_i$ is applicable w.r.t. $\bm{t}[f_0][f_1]\cdots[f_{i}]$.
	Likewise, we call $\bm{t}[f_0,f_1,\ldots,f_i]$ the \emph{application} of $\bm{t}$ w.r.t.
the Boolean functions $f_0,f_1,\cdots, f_{i}$.
	
    
   In particular, the \emph{collapsion} of term $\bm{t}$, denoted by $\bm{t}\downarrow$, 
   is the term  
   $\bm{t}[\underbrace{\bm{I},\ldots,\bm{I}}_{\len(\bm{t})}]$, namely, $\bm{t}\downarrow$ is
   obtained from $\bm{t}$ w.r.t. $\len(\bm{t})$ identity functions.


We hereafter denote by $\cons(\bm{\Sigma})$ the set of constraints $\bm{t}_1\sim\bm{t}_2$ over the signature $\bm{\Sigma}$
and lift the apply operator $[~]$ from terms to atomic formulas $\bm{t}_1\sim\bm{t}_2$.
%
	%
	For a constraint $\gamma=\bm{t}_1\sim\bm{t}_2\in\cons(\bm{\Sigma})$,
    we denote by $\gamma[f]$ the constraint $\bm{t}_1[f]\sim\bm{t}_2[f]$;
    and by ${\gamma\downarrow}$ the constraint ${\bm{t}_1\downarrow}\sim{\bm{t}_2\downarrow}$.
	Note that the former implicitly assumes that the Boolean function $f$ is applicable w.r.t.
	both terms $\bm{t}_1$ and $\bm{t}_2$ (in this case, we call that $f$ is applicable w.r.t. $\gamma$),  whereas
	the latter requires that the terms ${\bm{t}_1\downarrow}$ and ${\bm{t}_2\downarrow}$ have the same width
    (we call that $\bm{t}_1$
	and $\bm{t_2}$ are \emph{compatible} w.r.t.  collapsion).
	In addition, we let $\len(\gamma)=\max(\len(\bm{t}_1),\len(\bm{t}_2))$, and in the case
	that $\len(\gamma)=0$, we let $\gamma[f]=\top$ (resp. $\gamma[f]=\bot$) for
	any Boolean function $f$ if $\gamma$ is evaluated to true (resp. false).
	
	We subsequently extend the above notations to constraint sets. Suppose that $\Gamma\subseteq\cons(\bm{\Sigma})$,
	we let $\Gamma[f]\defeq\{\gamma[f]\mid \gamma\in\Gamma\}$, and let ${\Gamma\downarrow}\defeq\{{\gamma\downarrow}\mid\gamma\in\Gamma\}$. Remind that the notation $\Gamma[f]$ makes sense only
	if the Boolean function $f$ is \emph{applicable} w.r.t. $\Gamma$, namely $f$ is applicable w.r.t. each constraint $\gamma\in\Gamma$.
	Likewise, the notation ${\Gamma\downarrow}$ indicates that  $\bm{t}_1$ and $\bm{t}_2$ is compatible w.r.t. collapsion
	for each constraint $\bm{t}_1\sim\bm{t}_2\in \Gamma$.

	\begin{theorem}
		\label{Thm: Intuition}
		For a \bnn\ $\network{N}$ given by
  $f_\network{N} = f_{n-1}\circ f_{n-2}\circ\cdots\circ f_1\circ f_0$, 
 and a constraint $\gamma\in\cons(\bm{\Sigma})$, 
 we have:
		\begin{enumerate}
			\item $\network{N}, i\models \gamma$ iff $\network{N}, i+1\models\gamma[f_i]$ for each $i<n$.
			\item $\network{N}, i\models\gamma$ iff $\network{N}, i\models\gamma\downarrow$ for each $i\geq n$.
		\end{enumerate}
	\end{theorem}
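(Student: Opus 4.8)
The plan is to reduce this statement about constraints to one about terms and then prove the latter by induction. Since $\network{N},i\models\bm{t}_1\sim\bm{t}_2$ holds exactly when $\semantics{\bm{t}_1}_{\network{N},i}\sim\semantics{\bm{t}_2}_{\network{N},i}$, and since $\gamma[f]=\bm{t}_1[f]\sim\bm{t}_2[f]$ and $\gamma{\downarrow}=\bm{t}_1{\downarrow}\sim\bm{t}_2{\downarrow}$ are defined termwise, both items of the theorem follow immediately from the two term-level identities: (i) $\semantics{\bm{t}}_{\network{N},i}=\semantics{\bm{t}[f_i]}_{\network{N},i+1}$ whenever $i<n$ and $f_i$ is applicable w.r.t.\ $\bm{t}$; and (ii) $\semantics{\bm{t}}_{\network{N},i}=\semantics{\bm{t}{\downarrow}}_{\network{N},i}$ whenever $i\geq n$; for item~2 of the theorem one also tacitly assumes $\gamma{\downarrow}$ is defined, i.e.\ that the two terms are compatible w.r.t.\ collapsion. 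The boundary case $\len(\gamma)=0$, where $\gamma[f]$ is stipulated to be $\top$ or $\bot$ directly, is disposed of first, separately: then both terms of $\gamma$ are $\rhd$-free, so $\semantics{\cdot}_{\network{N},i}$ mentions no $f_j$ and is independent of $i$, and $\gamma{\downarrow}=\gamma$, making both items trivial.

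For (i) I would put $\bm{t}$ into the canonical form $\opN^{\ell_k}g_{k-1}(\cdots g_0(\opN^{\ell_0}\bm{b})\cdots)$ supplied by Proposition~\ref{Lem:OpN_Power} and induct from the innermost layer outward. The heart of the argument is the innermost block, read straight off the semantics of $\opN^{\ell_0}$: $\semantics{\opN^{\ell_0}\bm{b}}_{\network{N},i}=(f_{i+\ell_0-1}\circ\cdots\circ f_i)(\bm{b})$. If $\ell_0>1$, the apply operator replaces this by $\opN^{\ell_0-1}\bm{b}'$ with the vector constant $\bm{b}'=f_i(\bm{b})$, and indeed $\semantics{\opN^{\ell_0-1}\bm{b}'}_{\network{N},i+1}=(f_{i+\ell_0-1}\circ\cdots\circ f_{i+1})(f_i(\bm{b}))$ is the same value; if $\ell_0=1$, the subterm $g_0(\opN^{1}\bm{b})$ already evaluates to the constant $(g_0\circ f_i)(\bm{b})$, which is exactly the constant $\bm{b}''$ that the apply operator substitutes for $g_0(\opN^{1}\bm{b})$. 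With the deepest value pinned down, the equality must be propagated outward through each enclosing layer $g_j(\cdot)$ and $\opN^{\ell_j}$: the point to verify is that deleting one placeholder (and, when $\ell_0=1$, absorbing the neighbouring Boolean function into the constant) lowers the running offsets $\slen(\cdot)$ inside the surrounding $\opN$'s exactly in step with the unit increase of the evaluation position, so that every block index $i+\slen(\cdot)+m$ occurring in an enclosing $\opN$ is left unchanged, and with it the outer applications of $\network{N}$'s blocks. This offset bookkeeping — and in particular the $\ell_0=1$ case, which is exactly what the corresponding clause of the apply operator is engineered to accommodate — is the one genuinely delicate step.

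For (ii) I would argue directly: at a position $i\geq n$, every block index produced by the semantics of any $\opN^{k}$-subterm of $\bm{t}$ is $\geq i\geq n$, so by the convention $f_j=\bm{I}$ for $j\geq n$ each such subterm acts as the identity on its argument; a short structural induction on $\bm{t}$ then erases all placeholders, and what remains is exactly $\bm{t}{\downarrow}$, because applying $\bm{I}$ never alters the underlying vector constant. Apart from the offset bookkeeping in (i), everything is routine — including the base case $\len(\bm{t})=0$, where $\bm{t}$ is $\rhd$-free with position-independent semantics, and the mechanical unfolding of the apply operator and of collapsion — so I would package (i) as a lemma proved by structural induction on $\bm{t}$, after which both items of the theorem drop out.
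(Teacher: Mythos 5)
The paper never actually proves Theorem~\ref{Thm: Intuition} --- it is stated, followed only by a remark on evaluating $\gamma\downarrow$ --- so there is no official argument to compare yours against. Your overall route (reduce the constraint-level claim to a term-level lemma, put $\bm{t}$ in canonical form, induct structurally, and dispatch item~2 via the convention $f_j=\bm{I}$ for $j\geq n$) is surely the intended one, and your treatment of item~2 and of the $\len(\gamma)=0$ boundary case is fine.

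The gap is exactly the step you call ``genuinely delicate'' and then assert rather than verify: that in the $\ell_0=1$ case every enclosing block index $i+\slen(\cdot)+m$ is preserved. Under the paper's definition $\slen(f(\bm{t}))=\slen(\bm{t})+1$ it is not. Take $\bm{t}=\opN^{\ell_1}g_0(\opN^{1}\bm{b})$. Since $\slen(g_0(\opN^1\bm{b}))=2$, the semantics gives $\semantics{\bm{t}}_{\network{N},i}=(f_{i+\ell_1+1}\circ\cdots\circ f_{i+2})(g_0(f_i(\bm{b})))$, whereas $\bm{t}[f_i]=\opN^{\ell_1}\bm{b}''$ with $\bm{b}''=(g_0\circ f_i)(\bm{b})$ and $\slen(\bm{b}'')=0$, so $\semantics{\bm{t}[f_i]}_{\network{N},i+1}=(f_{i+\ell_1}\circ\cdots\circ f_{i+1})(\bm{b}'')$. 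The application removes \emph{two} units of $\slen$ from the surrounding context (one placeholder plus the absorbed $g_0$) while the evaluation position advances by only one, so the outer block indices slip by one and your lemma~(i) --- hence item~1 of the theorem --- fails for such terms under the definitions as literally stated. The culprit is an inconsistency between the paper's $\slen$ and its apply operator rather than your strategy: with $\slen(f(\bm{t}))=\slen(\bm{t})$ (a fixed function should not consume a network block position) both clauses of the apply operator decrease the context's $\slen$ by exactly one, every enclosing index is preserved, and your induction goes through verbatim. But a complete proof must actually carry out this computation and either flag the needed correction to $\slen$ or restrict attention to terms with no fixed function enclosing a placeholder; as written, the one calculation on which the whole theorem turns is the one you did not perform.
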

	Indeed, since $\gamma\downarrow$ must have the form $\bm{b}_1\sim\bm{b}_2$, where both $\bm{b}_1$
	and $\bm{b}_2$ are Boolean constants, then the truth value of $\gamma\downarrow$ can always be directly evaluated.
	
	\subsection{Automata Construction}
	\label{SubSec: Automata_Cons}

	Given a \bltl formula $\varphi$ in \nnf,
	we can construct a finite-state automaton  $\automaton{A}_\varphi=(Q_\varphi,\bm{\Sigma}, \delta_\varphi, I_\varphi, F_\varphi)$,
	where:

 \begin{itemize}
    \item $Q_\varphi = \bigcup_{\Gamma\subseteq\sub(\varphi)}\cl(\Gamma)$.
		Recall that $\cl(\Gamma)\subseteq 2^{\sub(\varphi)}$ if $\Gamma\subseteq\sub(\varphi)$, thus each state must be a subset of $\sub(\varphi)$.
  \item For each $q\in Q_\varphi$, let $\cons(q)\defeq q\cap\cons(\bm{\Sigma})$, let $q'=\{\psi\mid\opX\psi\in q\}$
		and let $q''=\{\psi\mid\opXd\psi\in q\}$. Then, for each Boolean function $ f \in\bm{\Sigma}$, we have
		$$\delta_\varphi(q, f)=\begin{cases}
			\emptyset, & \bot\in q \\
			\cl(q'\cup q''\cup\cons(q)[f]),
			& \bot\not\in q
		\end{cases}.$$
		\item $I_\varphi=\{q\in Q_\varphi\mid \varphi\in q\}$ is the set of initial states.
		\item $F_\varphi$ is the set of accepting states such that for every state $q\in Q_\varphi$, $q\in F_\varphi$ only if 
  $\{\psi\mid\opX\psi\in q\}=\emptyset$, $\bot\not\in q$ and $\cons(q)\downarrow$ is
		evaluated true.
	\end{itemize}

        For a \bnn $\network{N}$ given by $f_\network{N} = f_{n-1}\circ f_{n-2}\circ\cdots\circ f_1\circ f_0$,
        we denote by $\network{N}\in\lang{\automaton{A}_\varphi}$ if the sequence of the Boolean functions
        $f_0,f_1,\cdots, f_{n-1}$, regarded as a finite word, is accepted by the automaton $\automaton{A}_\varphi$.
        
        Intuitively, 
        $\network{N}$ accepts an input word 
        iff it has an accepting run $q_0, q_1 \cdots, q_n$, where $q_i$ is constituted with a set of formulas that make the specification $\varphi$ valid at the position $i$.
        In this situation, $I_\varphi$ refers to the states involving $\varphi$ and $q_0 \in I_\varphi$.
        For the transition $q_i \xrightarrow{f_i} q_{i+1}$, 
        $q_i'$ and  $q_i''$ indicate the sets of formulas which 
        should be satisfied in the next position $i+1$ 
        according to the semantics of \emph{next} ($\opX$) and \emph{weak next} ($\opXd$).
        Additionally, 
        $\cons(q_{i+1})$ is obtained by applying the Boolean function $f_i$ to the constraints in $q_i$.

	The following theorem reveals the relationship between $\varphi$ and $\automaton{A}_\varphi$.

	\begin{theorem}
		\label{Thm: Automata-Cons}
            Let  $\network{N}$ be a \bnn given by a sequence of Boolean functions 
            for a \bltl formula $\varphi$,
            we have:
		\begin{center}
  $\network{N}\models\varphi$ if and only if $\network{N}\in\lang{\automaton{A}_\varphi}$. 
  \end{center}
	\end{theorem}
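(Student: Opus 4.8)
The plan is to derive the theorem from a single \emph{correctness invariant} that connects the states of a run of $\automaton{A}_\varphi$ on the word $f_0f_1\cdots f_{n-1}$ with the satisfaction relation of $\network{N}$, and then to read off both implications from the definitions of $I_\varphi$ and $F_\varphi$. The invariant I would target is: for every run $q_0\xrightarrow{f_0}q_1\xrightarrow{f_1}\cdots\xrightarrow{f_{n-1}}q_n$ of $\automaton{A}_\varphi$ with $q_n\in F_\varphi$ and every position $0\le i\le n$, $\psi\in q_i$ implies $\network{N},i\models\psi$; and conversely, a $\network{N}$ with $\network{N}\models\varphi$ induces such an accepting run whose state $q_i$ contains every $\psi\in\sub(\varphi)$ with $\network{N},i\models\psi$. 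Granting the first half, ``$\Leftarrow$'' is immediate: an accepting run has $q_0\in I_\varphi$, hence $\varphi\in q_0$, hence $\network{N},0\models\varphi$. For ``$\Rightarrow$'' one sets $q_i$ to be a \emph{proper closure} of $\{\psi\in\sub(\varphi)\mid\network{N},i\models\psi\}$ (together with the suitably pushed-forward constraints) and checks that $q_i\in Q_\varphi$, that $q_0\in I_\varphi$, that consecutive states respect $\delta_\varphi$, and that $q_n\in F_\varphi$.

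For the harder ``$\Leftarrow$'' half I would run a nested induction: an outer \emph{downward} induction on the position $i$ (from $n$ to $0$), and an inner induction on a well-founded measure on formulas under which $\psi_1\opU\psi_2$ and $\psi_1\opR\psi_2$ outrank their one-step unfoldings $\psi_2\vee(\psi_1\wedge\opX(\psi_1\opU\psi_2))$ and $\psi_2\wedge(\psi_1\vee\opXd(\psi_1\opR\psi_2))$ — this measure is well founded because only finitely many unfolding and Boolean-decomposition steps can occur before reaching a subformula guarded by $\opX$ or $\opXd$, after which the recursion drops to the strictly smaller position $i+1$. The cases $\top$, $\wedge$, $\vee$ and the $\opU$/$\opR$ unfoldings use only the proper-closure clauses and the fixpoint laws of Proposition~\ref{Lem:BLTL_Equiv}. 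For $\opX\psi\in q_i$ one has $i<n$ (since $q_n$, being accepting, contains no $\opX$-formula) and $\psi\in q_{i+1}$ by the definition of $\delta_\varphi$, whence $\network{N},i+1\models\psi$ by the outer hypothesis and $\network{N},i\models\opX\psi$; the $\opXd$ case is symmetric, using that $\opXd$-formulas are satisfied at the terminal position.

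The atomic-constraint case is where the placeholder apparatus enters, and it fits the downward induction cleanly. If $\gamma\in q_i$ is an atomic constraint with $i<n$, then $\gamma[f_i]\in q_{i+1}$, because $\delta_\varphi(q_i,f_i)$ is a proper closure of a set containing $\cons(q_i)[f_i]\ni\gamma[f_i]$; the outer hypothesis gives $\network{N},i+1\models\gamma[f_i]$, and Theorem~\ref{Thm: Intuition}(1) then yields $\network{N},i\models\gamma$. At $i=n$ the acceptance condition says $\cons(q_n)\!\downarrow$ evaluates to true, so $\gamma\!\downarrow$ is true, and Theorem~\ref{Thm: Intuition}(2) gives $\network{N},n\models\gamma$; the degenerate subcase $\len(\gamma)=0$ (where $\gamma[f_i]$ is $\top$ or $\bot$) is handled directly, the $\bot$ possibility being excluded on an accepting run since $\bot\in q$ forces $\delta_\varphi(q,\cdot)=\emptyset$ and $\bot\notin q_n$. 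The ``$\Rightarrow$'' half of the invariant is dual; its one extra point is that every ``until'' obligation placed into some $q_i$ must actually be discharged before $q_n$, which is secured by always choosing, in the $\vee$-closure of an unfolding, a disjunct that genuinely holds in $\network{N}$ — for a satisfied $\psi_1\opU\psi_2$ this is eventually $\psi_2$ — so that the requirement $\{\psi\mid\opX\psi\in q_n\}=\emptyset$ can be met.

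The step I expect to be the main obstacle is this last interplay: making sure that the syntactic operations $[\,f\,]$ and $\downarrow$ stay perfectly synchronized with the semantics across the terminal region of the word — that the constraints accumulated along a run are exactly the iterated applications of the original ones so Theorem~\ref{Thm: Intuition}(1) can be chained, that collapsion coincides with feeding in the remaining identity blocks so Theorem~\ref{Thm: Intuition}(2) applies at the accepting state (here Proposition~\ref{Lem:OpN_Power} is needed to put terms in canonical form), and that the finite-word acceptance condition faithfully enforces the liveness content of $\opU$ and the safety content of $\opR$ rather than allowing obligations to be postponed indefinitely or vacuously truncated at the last position.
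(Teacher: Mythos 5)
Your proposal follows essentially the same route as the paper's proof: the ``acceptance implies satisfaction'' half is established by the same downward induction on the position combined with induction on formula structure (with atomic constraints handled via Theorem~\ref{Thm: Intuition}(1) for $i<n$ and Theorem~\ref{Thm: Intuition}(2) plus the acceptance condition at $i=n$), and the converse is obtained by taking $q_i=\{\psi\in\sub(\varphi)\mid\network{N},i\models\psi\}$ and checking it is a proper closure respecting $\delta_\varphi$, $I_\varphi$, and $F_\varphi$. Your explicit well-founded measure for the $\opU$/$\opR$ unfoldings is a point the paper leaves implicit, but it does not change the argument.
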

	The proof is given in Appendix \ref{Appendix: Automata-Cons} and an example of the construction is given in Appendix~\ref{Appendix:example}.

 	\subsection{Tableau-Based Construction}
	\label{SubSec: Tableau}

        We have successfully provided a process for converting an \bltl formula into an automaton on finite words. At first glance, it seems that the model checking problem w.r.t. \bnn can
        be immediately boiled down to a word-problem of finite automata. 
        Nevertheless, a careful analysis shows that this would result in a prohibitively high cost.
	   Actually, for a \bltl formula $\varphi$, 
        the state set of $\automaton{A}_\varphi$
	is $\bigcup_{\Gamma\subseteq\sub(\varphi)}\cl(\Gamma)\subseteq 2^{\sub(\varphi)}$,
	thus the number of states is exponential in the size of the length of $\varphi$.
       To avoid explicit construction, we provide an ``on-the-fly'' approach when performing synthesis.

        Suppose the \bltl $\varphi$ is given in \nnf and the \bnn 
        $\network{N}$ is given as a sequence of Boolean functions $f_0,f_1,\ldots,f_{n-1}$, using the following
	approach, we may construct a tree $\tableau{T}_{\varphi,\network{N}}$ which fulfills the followings:
        \begin{itemize}
		\item $\tableau{T}_{\varphi,\network{N}}$ is rooted at $\langle 0, \{\varphi\}\rangle$;
		\item For an internal node $\langle i, \Gamma\rangle$ with $i<n-1$, it has a child
		$\langle j, \Gamma'\rangle$ only if there is a tableau rule
		\[
		\trule{i}{\quad \Gamma\quad }{j}{\quad \Gamma'\quad}
		\]
		where $j$ is either $i$ or $i+1$.
		\item 		A leaf $\langle i, \Gamma\rangle$ of $\tableau{T}_{\varphi, \network{N}}$ is a \tname{Modal}-node with $i=n-1$, where 
  nodes to which only the rule \tname{Modal} can be applied are called \tname{Modal}-nodes.
	\end{itemize}

	\begin{figure}[ht]
		\hrule
		
		\vskip 2mm
		\[
		\tname{And} \quad  \trule{i}{\Gamma,\varphi_1\wedge\varphi_2}{i}{\Gamma,\varphi_1,\varphi_2}
		\qquad\qquad
		\tname{Or-$j$} \quad  \trule{i}{\Gamma,\varphi_1\vee\varphi_2}{i}{\Gamma,\varphi_j}
		\quad (j=1,2)
		\]
		\[
		\tname{True}\quad
		\trule{i}{\Gamma,\bm{t}_1\sim\bm{t}_2}{i}{\Gamma,\top}
		\qquad\qquad
		\tname{False}\quad
		\trule{i}{\Gamma,\bm{t}_1\sim\bm{t}_2}{i}{\Gamma,\bot}
		\]
		\[
		\tname{Until}\quad \trule{i}{\Gamma,\varphi_1\opU\varphi_2}{i}{\Gamma,\varphi_2\vee(\varphi_1\wedge\opX(\varphi_1\opU\varphi_2))}
		\]
		\[
		\tname{Release} \quad
		\trule{i}{\Gamma,\varphi_1\opR\varphi_2}{i}{\Gamma,\varphi_2\wedge(\varphi_1\vee\opXd(\varphi_1\opR\varphi_2))}
		\]
		\[
		\tname{Modal}\quad
		\trule{i}{\Gamma,\opX\psi_1,\ldots,\opX\psi_m,\opXd\varphi_1,\ldots,\opXd\varphi_k}{i+1}
		{\{\gamma[f_i]\mid \gamma\in\Gamma,\len(\gamma)>0\},\psi_1,\ldots,\psi_m,\varphi_1,\ldots,\varphi_k}
		\]
		
		\hrule
		\vskip 2mm
		\caption{Tableau rules for Automata Construction}
		\label{Fig: Tab-rules}
	\end{figure}
        Tableau rules are listed in Figure \ref{Fig: Tab-rules}. For the rule \tname{Modal}, we require that
        $\Gamma$ consists of atomic formulas being of the form $\bm{t}_1\sim\bm{t}_2$.
        In the rules \tname{True} and \tname{False}, we require that $\len(\bm{t}_1\sim\bm{t}_2)=0$
        and it is evaluated to true and false, respectively.

        Suppose $\langle n, \Gamma\cup\{\opX\psi_1,\ldots,\opX\psi_m\}\cup\{\opXd\varphi_1,\ldots,\opXd\varphi_k\}\rangle$ is a leaf of $\tableau{T}_{\varphi,\network{N}}$. We say it is \emph{successful} if
	$m=0$ and $\Gamma\downarrow$ is evaluated to true.
	In addition, we say a path of $\tableau{T}_{\varphi,\network{N}}$ is \emph{successful} if
    it ends with a successful leaf, and no node along this path contains $\bot$.
    
        In the process of the on-the-fly construction,
        we start by creating the root node, 
        then apply the tableau rules to rewrite the formulas in the subsequent nodes.
        In addition,  
        before the rule \tname{Modal} or \tname{Or-$j$} is applied, 
        we preserve the set of formulas,
        which allows us to trace back and construct other parts of the automaton afterward.
        We exemplify how
            to achieve the synthesis task via the construction in Section \ref{Sec: Synthesis}.
        
	\begin{theorem}
		\label{Thm: Model_Checking}
		$\network{N}\models\varphi$ if and only if $\tableau{T}_{\varphi,\network{N}}$ has a successful path.
	\end{theorem}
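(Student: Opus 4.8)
The plan is to prove a node‑local strengthening covering the whole tableau tree and then read the theorem off its root. Fix the \bnn $\network{N}$ given by $f_\network{N}=f_{n-1}\circ\cdots\circ f_0$, and for a finite set $\Gamma$ of \nnf formulas abbreviate by $\network{N},i\models\Gamma$ the statement that $\network{N},i\models\psi$ holds for every $\psi\in\Gamma$ (a set is read conjunctively). The key lemma is: \emph{for every node $\langle i,\Gamma\rangle$ of $\tableau{T}_{\varphi,\network{N}}$, the subtree rooted at $\langle i,\Gamma\rangle$ contains a successful path — i.e.\ a path from that node down to a successful leaf along which no node contains $\bot$ — if and only if $\network{N},i\models\Gamma$.} Instantiating this lemma at the root $\langle 0,\{\varphi\}\rangle$ yields exactly ``$\tableau{T}_{\varphi,\network{N}}$ has a successful path iff $\network{N},0\models\varphi$'', which is the theorem; note that the single ``iff'' lemma gives both the soundness and the completeness direction at once.

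To prove the lemma I would first observe that $\tableau{T}_{\varphi,\network{N}}$ is a finite tree: the rule \tname{Modal} strictly increases the position, which is bounded by the number of blocks, and between two \tname{Modal} steps only the static rules \tname{And}, \tname{Or-$j$}, \tname{True}, \tname{False}, \tname{Until}, \tname{Release} fire, each of which strictly decreases a suitable natural‑valued rank of the current formula set (conjunctions, disjunctions and the $\opU$/$\opR$‑unfoldings all decrease it, because the surviving new pieces $\opX(\psi_1\opU\psi_2)$ and $\opXd(\psi_1\opR\psi_2)$ are frozen under a next‑operator and contribute nothing to the rank). Hence the lemma can be proved by ordinary bottom‑up structural induction on $\tableau{T}_{\varphi,\network{N}}$, with one case per rule shape. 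For a static‑rule node the parent set is satisfiable at its position iff some child set is satisfiable at its position: \tname{And} by the meaning of $\wedge$; \tname{Until}, \tname{Release} by Proposition~\ref{Lem:BLTL_Equiv}(4)--(5); \tname{True}/\tname{False} because a placeholder‑free atomic constraint has a fixed truth value, so replacing it by $\top$ resp.\ $\bot$ is semantics‑preserving; \tname{Or-$j$} by the meaning of $\vee$ together with the fact that $\varphi_j$ implies $\varphi_1\vee\varphi_2$. For a \tname{Modal} node, which is applied only at a non‑terminal position so that the next position is still available, the semantics gives $\network{N},i\models\opX\psi_j\iff\network{N},i+1\models\psi_j$ and $\network{N},i\models\opXd\varphi_j\iff\network{N},i+1\models\varphi_j$, while $\network{N},i\models\gamma\iff\network{N},i+1\models\gamma[f_i]$ for each atomic $\gamma$ by Theorem~\ref{Thm: Intuition}(1); since a \tname{Modal} node carries no placeholder‑free constraint (otherwise \tname{True}/\tname{False} would be applicable), the child's constraint part $\{\gamma[f_i]:\gamma\in\Gamma,\len(\gamma)>0\}$ accounts for all of $\Gamma$'s constraints, so the parent set is satisfied at $i$ iff the child set is satisfied at $i+1$. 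In every case the induction hypothesis on the child(ren) closes the step, using also that $\bot\in\Gamma$ simultaneously blocks all successful paths and falsifies $\network{N},i\models\Gamma$, so the ``no $\bot$'' side condition is consistent on both sides.

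The base case is where I expect the real work, and it is the main obstacle. A leaf is a \tname{Modal} node at the terminal position $n$ (where the network's blocks are exhausted and $f_j=\bm{I}$ for $j\ge n$), so its set consists only of placeholder‑carrying atomic constraints, formulas $\opX\psi_j$, formulas $\opXd\varphi_j$, and possibly $\top$ or $\bot$. At the terminal position $\network{N},n\not\models\opX\psi$ for every $\psi$ (there is no next position) while $\network{N},n\models\opXd\psi$ for every $\psi$, and for each surviving atomic constraint $\network{N},n\models\bm{t}_1\sim\bm{t}_2$ iff $(\bm{t}_1\sim\bm{t}_2){\downarrow}$ evaluates to true, by Theorem~\ref{Thm: Intuition}(2). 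These are exactly the conditions ``$m=0$ and $\Gamma{\downarrow}$ is true (and $\bot\notin\Gamma$)'' defining a successful leaf, so the base case holds. The delicate points are all concentrated here: being precise about which node counts as terminal and about the boundary indexing; getting the boundary semantics of $\opX$ against $\opXd$ right — this is also exactly what makes the finite‑word setting convenient, since an eventuality $\psi_1\opU\psi_2$ that is never discharged is forced to leave an $\opX(\psi_1\opU\psi_2)$ conjunct at the terminal node and therefore fails, so no B\"uchi‑style acceptance condition is needed; and keeping the nondeterminism of \tname{Or-$j$} (and of the placeholder‑free evaluations) straight, so that ``the subtree has a successful path'' is matched with ``\emph{some} child works'' forwards and ``\emph{every} child fails'' backwards. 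Everything else reduces to Proposition~\ref{Lem:BLTL_Equiv} and Theorem~\ref{Thm: Intuition}. As an alternative one could route the whole argument through Theorem~\ref{Thm: Automata-Cons}, checking that the \tname{Modal} nodes along a path form a run of $\automaton{A}_\varphi$ on the word $f_0\cdots f_{n-1}$ with $I_\varphi$, $\delta_\varphi$ and $F_\varphi$ mirrored by the root, the \tname{Modal} rule, and the successful‑leaf condition; the only wrinkle there is that automaton states may retain compound subformulas that a \tname{Modal} node has already decomposed, so that correspondence is a simulation rather than a literal equality.
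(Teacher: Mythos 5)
Your proof is essentially correct, but it takes a genuinely different route from the paper's. The paper does not argue on the tableau semantically at all: it invokes Theorem~\ref{Thm: Automata-Cons} and then establishes a purely syntactic correspondence between accepting runs of $\automaton{A}_\varphi$ and successful paths of $\tableau{T}_{\varphi,\network{N}}$ --- in one direction building a path top-down while maintaining the invariants $\Gamma_j\subseteq q_i$ and $\network{N},i\models\psi$ for all $\psi\in q_i$ (using the run to resolve each \tname{Or-$j$} choice), and in the other direction reading off a run by taking $q_i=\bigcup_{j}\Gamma_{i,j}$ over the nodes of the path with depth counter $i$; this is precisely the ``alternative'' you sketch in your last sentence, and your observation that it is a simulation rather than a literal state equality is the right caveat. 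Your primary argument instead proves a node-local biconditional ($\network{N},i\models\Gamma$ iff the subtree under $\langle i,\Gamma\rangle$ contains a successful path) by bottom-up induction on the tree, closing each case with Proposition~\ref{Lem:BLTL_Equiv} and Theorem~\ref{Thm: Intuition} and doing the real semantic work at the terminal \tname{Modal}-leaf. What your route buys is self-containment (no dependence on the correctness of the automaton construction) and a stronger local invariant that directly justifies the backtracking used in Section~\ref{Sec: Synthesis}; what the paper's route buys is brevity, since all semantic content is already discharged in the proof of Theorem~\ref{Thm: Automata-Cons}. Two points you should make explicit if you write this up: the forward direction of your lemma needs $\tableau{T}_{\varphi,\network{N}}$ to contain \emph{both} \tname{Or-$j$} children of every disjunctive node (the paper's ``has a child only if'' phrasing does not literally guarantee this, though it is clearly intended), and your rank for termination must be a multiset or lexicographic measure, since the plain sum of unguarded subformula sizes is not strictly decreased by \tname{Until}/\tname{Release} alone but only by the subsequent \tname{Or}/\tname{And} steps. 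Neither is a substantive gap.
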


    \begin{proof}
	Let $\automaton{A}_\varphi$ be the automaton  corresponding to $\varphi$. According to Theorem \ref{Thm: Automata-Cons}, it suffices to show that $\network{N}\in\lang{\automaton{A}_\varphi}$ iff  $\tableau{T}_{\varphi,\network{N}}$ has a successful path.
     \par
     Suppose, $\network{N}$ is accepted by $\automaton{A}_\varphi$ with the run $q_0,q_1,\ldots, q_n$, we also create the root node $\langle 0, \Gamma_0=\{\varphi\}\rangle$. Inductively, we have the followings statements for each node $\langle i,\Gamma_j\rangle$ which is already constructed:
     \begin{enumerate}[1)]
         \item \label{Item:Cons_1} $\Gamma_j\subseteq q_i$;
         \item \label{Item:Cons_2} $\network{N}, i\models\psi$ for each $\psi\in q_i$ (see the proof of Thm. \ref{Thm: Automata-Cons})
     \end{enumerate}
     Then, if $\langle i,\Gamma_j\rangle$ is not a leaf, we create a new node $\langle i',\Gamma'_j\rangle$ in the following way:
     \begin{itemize}
         \item $i'=i$ if $\langle i,\Gamma_j\rangle$ is not a \tname{Modal}-node, otherwise $i'=i+1$;
         \item if rule \tname{Or-$k$} ($k=1,2$) is applied to $\langle i,\Gamma_j\rangle$ to some $\varphi_1\vee\varphi_2\in\Gamma_j$, we require that $\varphi_k\in\Gamma_j$;
         for other cases, $\Gamma'_j$ is uniquely determined by $\Gamma_j$ and the tableau rule which is applied.
     \end{itemize}
     It can be checked that both Items \ref{Item:Cons_1}) and \ref{Item:Cons_2}) still hold at $\langle i',\Gamma'_j\rangle$. Then, we can see that the path we constructed is successful since $q_n$ is an accepting state of $\automaton{A}_\varphi$.

     \par For the other way round, suppose that $\tableau{T}_{\varphi,\network{N}}$ involves a successful path
     \[
     \begin{array}{ll}
     \langle 0, \Gamma_{0,0}\rangle, \langle 0,\Gamma_{0,1}\rangle,\ldots,\langle 0, \Gamma_{0,\ell_0}\rangle, \langle 1, \Gamma_{1,0}\rangle, \langle 1, \Gamma_{1,1}\rangle, \ldots,
     \langle 1, \Gamma_{1,\ell_1}\rangle, \ldots,\\
     \langle i, \Gamma_{i,0}\rangle, \langle i, \Gamma_{i,1}\rangle, \ldots,\langle i, \Gamma_{i,\ell_i}\rangle,\ldots,
     \langle n, \Gamma_{n,0}\rangle, \langle n, \Gamma_{n,1}\rangle, \ldots,\langle n, \Gamma_{n,\ell_n}\rangle
     \end{array}\]
     then, the state sequence $q_0, q_1, \ldots, \ldots, q_n$ yields an accepting run of $\automaton{A}_\varphi$ on $\network{N}$, where $q_i=\bigcup_{j=0}^{\ell_i}\Gamma_{i,j}$.
     \qed
     \end{proof}   
        
	\section{\bnn Synthesis}
	\label{Sec: Synthesis}

    Let us now consider a more challenging task: Given a \bltl specification $\varphi$,  to find some \bnn $\network{N}$ such that $\network{N}\models\varphi$.
    In the synthesis task,
    the parameters of the desired \bnn are not given,
    even, we are not aware of the length (i.e., the number of blocks) of the network.
    To address this challenge, we leverage the tableau-based method (cf. Section \ref{SubSec: Tableau}) to construct the automaton for the given specification $\varphi$ and check the existence of the desired \bnn at the same time. But when performing
    the tableau-based rewriting, we need to view each block (i.e., a Boolean function) $f_i$ as an unknown variable (called \emph{block variable} in what follows).

    The construction of the tableau-tree starts from the root node $\langle 0, {\varphi} \rangle$.
    During the construction, for each internal node $\langle i, \Gamma \rangle$, the following steps are followed:
    Initially, rules other than \tname{Or-$1$} and \tname{Modal} are applied to $\Gamma$ until no further changes occur.
    Then rule \tname{Or-$j$} is applied to the disjunctions in the formula set,
    and we always first try rule \tname{Or-$1$} when the rewriting is performed. Lastly, rule \tname{Modal} is applied to generate node $\langle i+1, \Gamma' \rangle$, which becomes the next node in the path, and the Boolean function $f_i$ used in the rewriting is just a {block variable}.
    Particularly, we
    retain a stack of formula sets on each of those either \tname{Or-$j$} or \tname{Modal} is applied for tracing back.
    Once an $\opX$-free \tname{Modal}-node is reached,
    We verify the success of the path.
    However,
    since now the blocks are no longer concrete in this setting, 
    an atomic formula of the form $\gamma[f_i,\ldots, f_{i+k}]$ cannot be immediately evaluated even if it is $\rhd$-free.
    As a result, whether a path is \emph{successful} cannot be evaluated directly.
            
    To settle this, we invoke an integer different logic (\idl)
	solver to examine the satisfiability of the atomic formulas in the \tname{Modal}-nodes along the path,
	and we declare success if all of them are satisfiable and it in addition ends up with an
	$\opX$-free \tname{Modal}-node.
        Meanwhile, the model given by the solver would reveal hyper-parameters of the \bnn,
        which then we adopt to obtain the expected \bnn.
        For a node $\langle i,\Gamma\rangle$, we call
	$i$ to be the \emph{depth counter}.
	Once the infeasibility is reported by
	the \idl solver, or some specific depth counter (call it the \emph{threshold}) is reached, a trace-back to the 
        nearest \tname{Or-$1$} node is required:
	nodes under that node are removed, and then use \tname{Or-$2$} for that, but this time
	we do not push anything into the stack, because both choices for the disjunctive formula have
	been tried so far. If no \tname{Or-$1$} nodes remains in the stack when doing trace-back,
	we declare the failure of the synthesis.

        Now, there are two issues to deal with during that process. The first is, how to determine if
	the aforementioned `threshold' is reached; second, how can we convert the satisfiability testing
	into \idl-solving.

    \subsection{The Threshold}
    \label{SubSec: Threshold}
	There exists a na{\"i}ve bound for the first problem, which is just the state number of
	$\automaton{A}_\varphi$.
	However, this bound is in general not compact (i.e., doubly exponential in the size of the formula $\varphi$), and thus we provide another tighter bound.
 
	We first define the following notion:
	We call two modal nodes  $\langle i, \Gamma\rangle$ and $\langle j, \Gamma'\rangle$
	are \emph{isomorphic}, denoted as $\langle i,\Gamma\rangle\cong\langle j,\Gamma'\rangle$, if $\Gamma$ can be transformed into $\Gamma'$ under a (block) variable bijection.
	The following lemma about isomorphic model nodes is straightforward.

	\begin{lemma}
		\label{Lem: Cong}
		If $\langle i,\Gamma\rangle\cong\langle j,\Gamma'\rangle$ and
		$\langle i,\Gamma\rangle$ could lead to a successful leaf (i.e., satisfiable leaf), then so does $\langle j,\Gamma'\rangle$.
	\end{lemma}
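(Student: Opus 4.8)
The claim is that isomorphism of modal nodes is preserved by the whole subsequent synthesis construction, so the plan is to \emph{transport} a witnessing sub-construction from $\langle i,\Gamma\rangle$ to $\langle j,\Gamma'\rangle$ along the block-variable bijection, using that every tableau rule and the success/satisfiability criterion are oblivious to the \emph{names} of block variables. Let $\sigma$ be the bijection witnessing $\langle i,\Gamma\rangle\cong\langle j,\Gamma'\rangle$, so $\sigma(\Gamma)=\Gamma'$, lifted to terms, constraints and formulas by renaming the block variables occurring in them while fixing every signature function in $\bm{\Sigma}$; since $\sigma(\Gamma)=\Gamma'$ preserves the syntactic positions at which block variables occur, $\sigma$ automatically respects their widths. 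A modal node reached at depth $k$ can only mention the block variables $f_0,\dots,f_{k-1}$ introduced by the first $k$ applications of \tname{Modal}, and any sub-construction issuing from depth $k$ introduces only block variables among $f_k,f_{k+1},\dots$; hence the block variables of $\Gamma$ lie in $\{f_0,\dots,f_{i-1}\}$ and those of $\Gamma'$ in $\{f_0,\dots,f_{j-1}\}$, and $\sigma$ extends to an arity-preserving bijection $\widehat\sigma$ on all block variables by putting $\widehat\sigma(f_{i+\ell})\defeq f_{j+\ell}$ for every $\ell\ge 0$. This $\widehat\sigma$ is injective because the two ``halves'' of its domain, $\{f_0,\dots,f_{i-1}\}$ and $\{f_i,f_{i+1},\dots\}$, are disjoint, and so are the two halves of its image, $\{f_0,\dots,f_{j-1}\}$ and $\{f_j,f_{j+1},\dots\}$.

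Next I would record a commutation property, writing $d\defeq j-i$: whenever a tableau rule rewrites a node $\langle k,\Delta\rangle$ to $\langle k',\Delta''\rangle$, the same rule rewrites $\langle k+d,\widehat\sigma(\Delta)\rangle$ to $\langle k'+d,\widehat\sigma(\Delta'')\rangle$. For \tname{And}, the disjunction rules \tname{Or-$1$}/\tname{Or-$2$}, \tname{Until} and \tname{Release} this is immediate, as the depth is irrelevant and $\widehat\sigma$ merely commutes with the Boolean and temporal connectives, neither creating nor deleting subformulas. For \tname{True}/\tname{False} it is equally immediate: these are only applied to $\rhd$-free, block-variable-free constraints, which $\widehat\sigma$ fixes, so the same evaluation applies. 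For \tname{Modal} applied at depth $k$ (which rewrites its constraints by $f_k$), the shifted instance is applied at depth $k+d$ and rewrites by $f_{k+d}=\widehat\sigma(f_k)$; since $\len(\widehat\sigma(\gamma))=\len(\gamma)$ — renaming leaves the number of placeholders unchanged, so the guard $\len(\gamma)>0$ is preserved — and $\widehat\sigma(\gamma[f_k])=\widehat\sigma(\gamma)[\widehat\sigma(f_k)]$, the set $\{\gamma[f_k]\mid\gamma\in\cons(\Delta),\len(\gamma)>0\}$ is sent by $\widehat\sigma$ exactly onto $\{\gamma'[f_{k+d}]\mid\gamma'\in\cons(\widehat\sigma(\Delta)),\len(\gamma')>0\}$; together with $\widehat\sigma(\opX\psi)=\opX\widehat\sigma(\psi)$ and $\widehat\sigma(\opXd\varphi)=\opXd\widehat\sigma(\varphi)$ this yields $\widehat\sigma(\Delta'')$.

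With the commutation property in hand, take a finite sequence $\pi$ of tableau steps from $\langle i,\Gamma\rangle$ to a successful leaf $\langle n,\Delta\rangle$ — that is, an $\opX$-free \tname{Modal}-node whose associated \idl constraint set is satisfiable, with no $\bot$ along the way — together with all the \tname{Or}-choices $\pi$ makes. Applying $\widehat\sigma$ node-by-node while shifting every depth counter by $d$ produces, by the commutation property, a legitimate tableau sub-construction $\widehat\sigma(\pi)$ rooted at $\langle j,\Gamma'\rangle$ and ending at $\langle n+d,\widehat\sigma(\Delta)\rangle$; the depth shift is sound precisely because the depth counter only pins down which block variable \tname{Modal} introduces, and $\widehat\sigma$ was constructed to match. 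Its endpoint is again an $\opX$-free \tname{Modal}-node, because $\widehat\sigma$ is bijective and preserves the outermost shape of every formula (so $\opX$-formulas are neither created nor destroyed), and no node contains $\bot$ since $\widehat\sigma$ fixes the block-variable-free constraints on which \tname{True}/\tname{False} act. Finally, the \idl constraint system collected along $\widehat\sigma(\pi)$ is the $\widehat\sigma$-image of the one collected along $\pi$ (each atomic constraint $\gamma[f_k,\dots]$ is replaced by $\widehat\sigma(\gamma)[f_{k+d},\dots]$); since $\widehat\sigma$ is a bijection on block variables, any assignment of concrete Boolean functions satisfying the original system, precomposed with $\widehat\sigma^{-1}$, satisfies the image system. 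Hence $\widehat\sigma(\pi)$ leads to a successful leaf, so $\langle j,\Gamma'\rangle$ does as well.

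I expect the only genuinely delicate point to be the bookkeeping in the first paragraph: that the block variables of a depth-$k$ node are confined to $\{f_0,\dots,f_{k-1}\}$, that the ones introduced below it are confined to $\{f_k,f_{k+1},\dots\}$, and that these two facts together make $\widehat\sigma$ a well-defined, arity-preserving bijection satisfying $\widehat\sigma(f_{i+\ell})=f_{j+\ell}$ regardless of whether $i\le j$ or $i>j$. Everything after that — the commutation property and the transfer of \idl-satisfiability — is a routine structural induction on the tableau rules, which is presumably why the authors call the lemma straightforward.
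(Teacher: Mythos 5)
The paper gives no proof of this lemma at all --- it is simply declared ``straightforward'' --- so there is nothing of the authors' to compare your argument against. Your renaming-and-transport strategy (lift the block-variable bijection to $\widehat\sigma$ with $\widehat\sigma(f_{i+\ell})=f_{j+\ell}$, prove that every tableau rule commutes with $\widehat\sigma$ modulo a depth shift, and transport a witnessing sub-construction) is clearly the argument the authors intend, and the bookkeeping you flag as delicate --- that a depth-$k$ node mentions only $f_0,\dots,f_{k-1}$, so $\widehat\sigma$ is a well-defined arity-preserving injection --- is handled correctly.

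There is, however, one genuine gap, and it matters for the only place the lemma is used (justifying the backtracking threshold in Section 5.1). In the synthesis setting a path is successful only if the atomic constraints collected in \emph{all} \tname{Modal}-nodes from the root to the leaf are \emph{jointly} satisfiable; this joint reading is forced by soundness, since the solver's model must describe a single \bnn. Your final paragraph establishes satisfiability of the transported suffix system in isolation, by precomposing a witness with $\widehat\sigma^{-1}$. But the spliced path also carries the prefix constraints accumulated above $\langle i,\Gamma\rangle$ (resp. $\langle j,\Gamma'\rangle$), and these pin down the shared block variables $f_0,\dots,f_{i-1}$. For a block variable $h$ of $\Gamma$, the transported suffix constraints mention $h=\sigma^{-1}(\sigma(h))$, so the prefix witness's value $\nu(h)$ must do the job that the original witness's value $\nu(\sigma(h))$ did --- and nothing in the definition of $\cong$ guarantees that $\nu(h)$ and $\nu(\sigma(h))$ are interchangeable. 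So either ``successful leaf'' must be read locally to the subtree below the node (in which case your proof is complete but the lemma no longer quite supports the pruning argument), or the transport must additionally reconcile the prefix witness with the transported suffix on $f_0,\dots,f_{i-1}$. Neither your proof nor the paper addresses this; it is the one place where ``straightforward'' is doing real work.
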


	Thus, given $\varphi$, the threshold can be the number of equivalence classes w.r.t. $\cong$.
	To make the analysis clearer, we here introduce some auxiliary notions.
	\begin{itemize}
		\item We call an atomic constraint $\gamma$ occurring in $\varphi$ to be
		an \emph{original constraint} (or, \emph{non-padded constraint}); and
		call a formula being of the form $\gamma[f_i,\ldots,f_j]$
		\emph{padded constraint}, where $f_i,\ldots,f_j$ are block variables.
		\item A (padded or non-padded) constraint with length $0$ (i.e., $\rhd$-free) is called
		\emph{saturated}. In general, such a constraint is obtained
		from a non-padded constraint $\gamma$ via applying $k$ layer variables,
		where $k=\len(\gamma)$.
	\end{itemize}

	\begin{theorem}
		\label{Thm: Threshold}
		Let $\varphi$ be a closed \bltl formula, and let
		\begin{itemize}
			\item $c=\#(\cons(\bm{\Sigma})\cap\sub(\varphi))$, i.e., the number
			of (non-padded) constraints occurring in $\varphi$;
			\item $k=\max\{\len(\gamma)\mid\gamma\in\cons(\bm{\Sigma})\cap\sub(\varphi)\}$, i.e., the maximum length of non-padded constraints occurring in $\varphi$;
			\item $p$ be the number of temporal operators in $\varphi$
		\end{itemize}
		then,  $2^{(k+1)c+p}+1$ is a threshold for synthesis.
	\end{theorem}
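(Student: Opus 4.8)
The plan is to bound the number of equivalence classes of \tname{Modal}-nodes under $\cong$, which by Lemma~\ref{Lem: Cong} gives a legitimate threshold (once we revisit an isomorphic modal node, nothing new can be gained). First I would argue that every formula appearing in a \tname{Modal}-node is built only from subformulas of $\varphi$ and from \emph{saturated padded constraints}: a \tname{Modal}-node's formula set $\Gamma$ consists of atomic formulas $\gamma[f_i,\ldots,f_j]$ obtained by repeatedly applying block variables to the non-padded constraints of $\varphi$, together with temporal subformulas carried along by the closure rules. Since the \tname{Modal} rule only retains constraints with $\len(\gamma)>0$ after one application step, and each application decreases the length by one (cf.\ the remark that $\len(\bm{t}[f])=\len(\bm{t})-1$), a padded constraint descended from $\gamma$ can appear in at most $\len(\gamma)+1 \le k+1$ distinct ``stages'' before it saturates and is evaluated away.

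Next I would count, up to block-variable bijection, how many distinct formula-sets a \tname{Modal}-node can carry. For the constraint part: there are $c$ original constraints, and each contributes a padded constraint at one of at most $k+1$ stages; since we only care about membership in $\Gamma$ up to a renaming of the block variables, the relevant data is \emph{which} original constraints are present and \emph{at what stage}, giving at most $2^{(k+1)c}$ possibilities for the constraint part. For the temporal part: the non-constraint formulas in a \tname{Modal}-node are of the form $\opX\psi$ or $\opXd\psi$ where $\psi\in\sub(\varphi)$, and each such $\psi$ is governed by one of the $p$ temporal operators of $\varphi$; hence there are at most $2^{p}$ choices for the temporal part. Multiplying, the number of $\cong$-classes of \tname{Modal}-nodes is at most $2^{(k+1)c}\cdot 2^{p} = 2^{(k+1)c+p}$. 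Therefore if the depth counter reaches $2^{(k+1)c+p}+1$, the path has visited strictly more than $2^{(k+1)c+p}$ \tname{Modal}-nodes, so by pigeonhole two of them are isomorphic, and Lemma~\ref{Lem: Cong} lets us prune without loss of completeness; this is the claimed threshold.

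The main obstacle I expect is making the ``stage'' bookkeeping precise enough that the count $2^{(k+1)c}$ is genuinely correct rather than merely plausible. In particular I need to check: (i) that applying \tname{Or}-rules and the closure rules between modal steps does not introduce constraints that are not padded versions of original constraints (it does not, since those rules never touch the $\opN^k$ placeholders or the Boolean functions — they only decompose $\wedge,\vee,\opU,\opR$); (ii) that the block-variable bijection in the definition of $\cong$ really is enough to identify two modal nodes that differ only by which fresh block variables were used at each step, so that the ``stage'' (how many applications have been performed) together with the identity of the original constraint is a complete invariant; and (iii) that the depth counter $i$ increments by exactly one per \tname{Modal} application, so ``depth counter $=2^{(k+1)c+p}+1$'' indeed corresponds to having passed through that many \tname{Modal}-nodes along the current path. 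Points (ii) and (iii) are essentially definitional once stated carefully, so the real work is (i) plus a clean inductive statement of the invariant ``at a \tname{Modal}-node of depth $i$, $\Gamma$ is a subset of $\{\text{saturated-stage padded constraints}\} \cup \{\opX\psi,\opXd\psi : \psi\in\sub(\varphi)\}$ up to block-variable renaming,'' which I would prove by induction on the tableau construction.
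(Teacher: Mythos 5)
Your proposal is correct and follows essentially the same route as the paper's proof: bound the number of $\cong$-classes of \tname{Modal}-nodes by $2^{(k+1)c}$ for the padded-constraint part (a subset of the at most $k+1$ ``stages'' of each of the $c$ original constraints, identified up to block-variable renaming) times $2^{p}$ for the $\opX$/$\opXd$-guarded part, then apply the pigeonhole principle together with Lemma~\ref{Lem: Cong}. The paper phrases your ``stages'' as starting/ending indices of padded constraints and your renaming-invariance as fixing the common ending index, but the decomposition and the count are the same.
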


	The proof is shown in Appendix \ref{Appendix: Threshold}.
    \subsection{Encoding with \idl Problem}
    \label{SubSec: Encoding}
    Another problem is how to convert the satisfiability testing 
    into \smt-solving. 
    To tackle this, 
    we present a method that transforms \bltl atomic formulas to \idl constraints.

    We may temporarily view a Boolean function $g:\domB^m\to\domB^n$ as a (partial) integer function with domain $[2^m]$,
    namely, we equivalently view $g$ maps $\dec(\bm{b})$ to $\dec(g(\bm{b}))$.
    
    For a $\rhd$-free term $\bm{t}=(f_k\circ f_{k-1}\circ\cdots\circ f_0)(\bm{b})$, we say that 
    $(f_i\circ f_{i-1}\circ\cdots\circ f_0)(\bm{b})$ is an \emph{intermediate term} of $\bm{t}$ where $i\leq k$.
    In what follows, we denote by $\bm{T}$ the set of all intermediate terms that may occur in the
    process of \smt-solving, 
    which is a part of synthesis that check the satisfiability of atomic formulas in successful leaves.

    Remind that in a term or an intermediate term, a symbol $g$ may either be a fixed function or a 
    variable that need to be determined by the \smt-solver (i.e., \emph{block variables}). To make it clearer, we in general use
    $g_0, g_1, \ldots$ to designate the former functions, whereas use $f_0, f_1$, etc for the latter
    cases. 

    The theory of \idl is limited to handling the difference constraints of the form $x - y \sim c$, 
    where $x$, $y$ are integer variables and $c$ is an integer constant.
    However, since functions occur in the terms, they cannot be expressed using \idl. 
    To this end, 
    we note that we merely care about partial input-output relations of the functions,
    which consist of mappings among $\bm{T}$, 
    and then the finite mappings can be expressed by integer constraints.
    Thus, for each intermediate term $\bm{t}\in\bm{T}$,  we introduce an integer variable $v_{\bm{t}}$.

    Then, all constraints describing the synthesis task are listed as follows.
    \begin{enumerate}[(1)]
        \item For each \bltl constraints $\bm{t}_1\sim\bm{t}_2$, we have a conjunct $v_{\bm{t}_1}\sim v_{\bm{t}_2}$.
        \item For each block variable $f:\domB^n\to\domB^m$ and each $f(\bm{t})\in\bm{T}$, we add the bound constraints
        $0\leq v_{f(\bm{t})}$ and $v_{f(\bm{t})}\leq 2^m$. 
        \item For each block variable $f$ and every $\bm{t}_1,\bm{t}_2\in\bm{T}$, we have
        $v_{\bm{t}_1}=v_{\bm{t}_2}\rightarrow v_{f(\bm{v}_1)} = v_{f(\bm{v}_2)}$, which guarantees $f$ to be a mapping.
        \item For every fixed function $g$, we impose the constraint 
        $v_{g(\bm{t})}=\dec(g(\bin(v_{\bm{t}})))$ for every $\bm{t}\in\bm{T}$.
    \end{enumerate}

    Once the satisfiability is reported by the \smt-solver, 
    we extract partial mapping information of $f_i$'s from the solver's model,
    by analyzing equations of the form $v_{\bm{t}} = c$, where $c$ is an integer called the value of $\bm{t}$.  
    We iterate over the model and record the value of terms, 
    when we encounter an equation in the form of $v_{f_i(\bm{t})} = c$, we query the value of $\bm{t}$,
    and obtain one input-output relation of $f_i$.
    Eventually, we get partial essential mapping information of suc $f_i$'s.

    \subsection{Utilize the Synthesis}
    \label{SubSec: Finally_Synthesis}
    A \bnn that satisfies the specification can be obtained via block-wise training, 
    namely, training each block independently to fulfill its generated input-output mapping relations, which is extracted by the \smt-solver during the synthesizing process.
    Indeed, such training is not only in general lightweight, but also able to reuse the 
    pre-trained blocks.
 
    Let us now consider a more general requirement that we have both high-level temporal specification (such as fairness, robustness) and data constraints (i.e., labels on a dataset), and is asked to obtain a \bnn to meet all these obligations.

    A straightforward idea is to express all data constraints with \bltl, and then perform a monolithic synthesis. However, such a solution seems to be infeasible, because the large amount of data
    constraints usually produces a rather complicated formula, and it makes the synthesis extremely difficult.

    An alternative approach is to first perform the synthesis w.r.t. the high-level specification, then do a retraining upon the dataset.  However, the second phase may distort the result of the first phase. In general, one need to conduct an iterative cycle composed of synthesis-training-verification, yet the convergence of such process cannot be guaranteed. Thus, we need make a trade-off between these two types of specifications.

    More practically, synthesis is used as an ``enhancement'' procedure. Suppose, we already have some \bnn trained with the given dataset, then we are aware the hyper-parameters of that. 
    This time, we have more information when doing synthesis, e.g., the threshold is replaced by the length of the network, and the shape (i.e., the width of input and output) of each block are also given. With this, we may perform a more effective \smt-solving process, and then retrain each block individually. Definitely, this might affect the accuracy of network, and some compromise also should be done.

    \section{Experimental Evaluation}
    \label{Sec:Expr_Eval}
    We implement a prototype tool in Python,
    which uses Z3 \cite{de2008z3} as the off-the-shelf \idl solver and PyTorch to train blocks and {\bnn}s.    
    To the best of our knowledge, few existing work on synthesizing \bnn has been done so far.
    Hence, we mainly investigate the feasibility of our approach by exploring how much the trustworthiness of \bnn can be enhanced, 
    and the corresponding trade-off on accuracy degradation.
    The first two experiments focus on evaluating the effectiveness of synthesis in enhancing the properties of {\bnn}s
    We set {\bnn}s with diverse architectures as baselines, and synthesize models via the ''enhancement'' procedure, wherein the threshold matches the length of the baselines, and the shape of blocks are constrained to maintain the same architecture as the baselines.
    Eventually, the blocks are retrained to fulfill the partial mapping, and the synthesized model is obtained through retraining on the dataset.
    We compare the synthesized models and their baselines on two properties: \emph{local robustness} and \emph{individual fairness}.
    
    Moreover, we study the potential of our approach to 
    assist in determining the network architecture.
    \paragraph{Datasets.}
    We train models and evaluate our approach over two classical datasets, \mnist \cite{deng2012mnist} and \uci \cite{uci}.

    \mnist is a dataset of handwritten digits,
    which contains 70,000 gray-scale images with 10 classes, and each image has $28 \times 28$ pixels.
    In the experiments, 
        we downscale the images to $10 \times 10$, 
        and binarize the normalized images, 
        and then transform them into $100$-width vectors.

    \uci contains 48,842 entries with 14 attributes, such as age, gender, workclass and occupation.
    The classification task on the dataset 
        is to predict whether an individual's annual salary is greater than 50K.
    We first remove unusable data, retain 45,221 entries, 
    and then transform the real-value data into 66-dimension binarized vectors as input.

    \paragraph{Experimental Setup.}
    In the block-wise training,
    different loss functions are employed for internal and output blocks: the \mse loss function for internal blocks and the cross-entropy loss function for output blocks.
    The training process entails a fixed number of epochs, with 150 epochs for internal blocks and 30 epochs for output blocks.
    The experiments are conducted on a 3.6G HZ CPU with 12 cores and 32GB RAM, and the blocks and {\bnn}s are trained using a single GeForce RTX 3070 Ti GPU.
    
    \begin{table}[htbp]
        \centering
        \caption{\bnn baselines.}
         \setlength{\tabcolsep}{3mm}{
        \begin{tabular}{ccc|ccc}
        \hline
        \textbf{Name}& \textbf{Arch}& \textbf{\acc}  & \textbf{Name}  & \textbf{Arch}  & \textbf{\acc} \\ \hline
        \rowcolor[HTML]{EFEFEF}
        {\color[HTML]{000000} \textbf{R1}} & {\color[HTML]{000000} 100-32-10}    & {\color[HTML]{000000} 82.62\%} & {\color[HTML]{000000} \textbf{F1}} & {\color[HTML]{000000} 66-32-2}    & {\color[HTML]{000000} 80.12\%} \\
        \textbf{R2}  & 100-50-10   & 84.28\%   & \textbf{F2}   & 66-20-2    & 79.88\%  \\
        \rowcolor[HTML]{EFEFEF}
        {\color[HTML]{000000} \textbf{R3}} & {\color[HTML]{000000} 100-50-32-10} & {\color[HTML]{000000}83.50\%} & {\color[HTML]{000000} \textbf{F3}} & {\color[HTML]{000000} 66-32-20-2} & {\color[HTML]{000000} 78.13\%} \\ \hline
        \end{tabular}}
        \label{Table.BNN}
        \vspace{-2.0em}
    \end{table}

    \paragraph{Baseline.}
    We use six neural networks with different architectures as baselines, where three models
    \textbf{R1}-\textbf{R3} are trained on the \mnist for 10 epochs with a learning rate of $10^{-4}$ to study on local robustness.
    For individual fairness, 
        we train 3 models (\textbf{F1}-\textbf{F3}) on the \uci for 10 epochs, with a learning rate of $10^{-3}$,
        and split the dataset into
        a training set and a test set in a 4:1 ratio.

    The detailed information is listed in Table \ref{Table.BNN},
    Column (Name) indicates the name of {\bnn}s, and 
    Column (Arch) presents their architectures.
    The architecture of each network is described as by a sequence $\{n_i\}_{i=0}^s$,
    where $s$ is the number of the blocks in the network, and
    $n_i$  and $n_{i+1}$ indicate the input and output dimensions of the $i$-th block.  For instance, 100-32-10 indicates that the \bnn has two blocks, the input dimensions of these blocks are 100 and 32 respectively, and the number of classification labels is 10.
    Column (\acc) shows the accuracy of the models on the test set.
    
    \subsection{Local Robustness}
    \label{SubSec: Robust}
    In this section, we evaluate the effectiveness of our approach for enhancing the robustness of models in different cases.
    We use the metric, called Adversarial Attack Success Rate (\as), to measure a model's resistance to adversarial attacks. 
    \as is calculated as the proportion of 
    perturbed inputs that leads to a different prediction result compared to the original input.

    We choose 30 image vectors from the training set, and 
    set the maximum perturbation to four levels,
    $\epsilon \in \{1, 2, 3, 4\}$.
    The value of $\epsilon$ indicates the number of positions that can be modified in one image vector.
    One selected input vector, one maximum perturbation $\epsilon$ and one baseline model 
        constitute a case, resulting in a total of 360 cases.

    For each of the 360 case, 
        we make a synthesized model individually, 
        and compare its \as with the corresponding baseline.
    For the local robustness property (cf. Section \ref{SubSec: Example_Props}),
    since the input space is too large to enumerate,
    we need to sample inputs within $B\left(\bm{u},\epsilon\right)$ when describing the specification,
    which is formulated as
    $\bigwedge_{i=1}^k \left(\network{N}(\bm{u}) = \network{N}(\bm{b}_i)\right)$,
        where each $\bm{b}_i$ is a sample and $k$ is the number of samples.
    We here sample $100$ points within the maximum perturbation limit $\epsilon$.
    The specification is written as 
    $\bigwedge_{i=1}^k \left(\opN^n\bm{u} = \opN^n\bm{b}_i\right)$, where $n$ is the number of the block of the baseline. 
    Subsequently, we use the block constraint (cf. Section \ref{SubSec: Encoding}), $0\leq v_{f_i(\bm{t})}\leq 2^m$, to specify the range of output of each block. 
    To make the bound tighter, we retain the maximal and minimal activations of each block using calibration data run on the baseline,
    and then take the recorded values as bounds.
    Eventually, the generated mappings are used in the block-wise training, and then the enhanced \bnn is obtained  through retraining on the \mnist dataset.
    
    \begin{figure}[t]
        \centering
        \subfigure[Arch:100-32-10]{
        \label{Fig.sub.1}
        \includegraphics[scale=0.15]{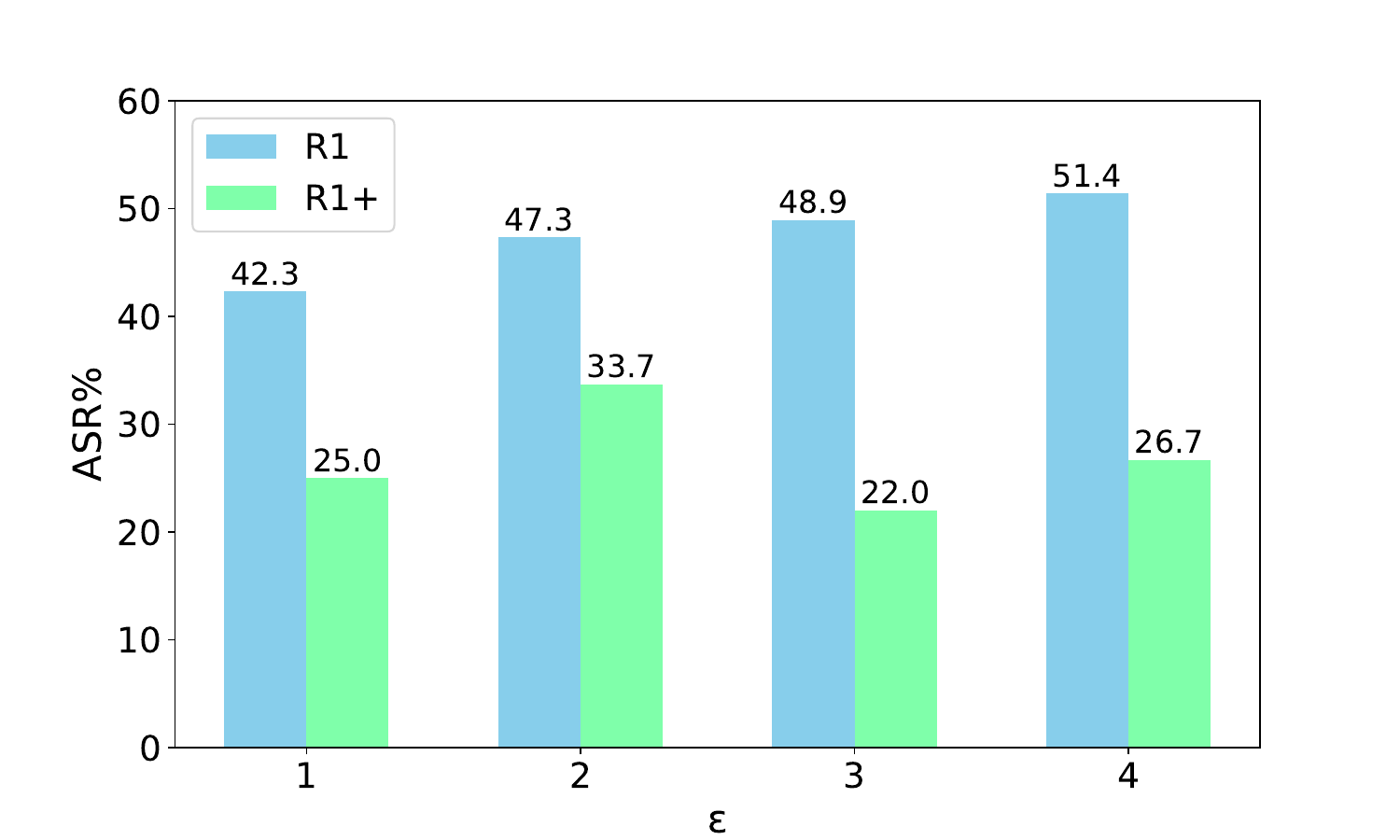}}
        \subfigure[Arch:100-50-10]{
        \label{Fig.sub.2}
        \includegraphics[scale=0.15]{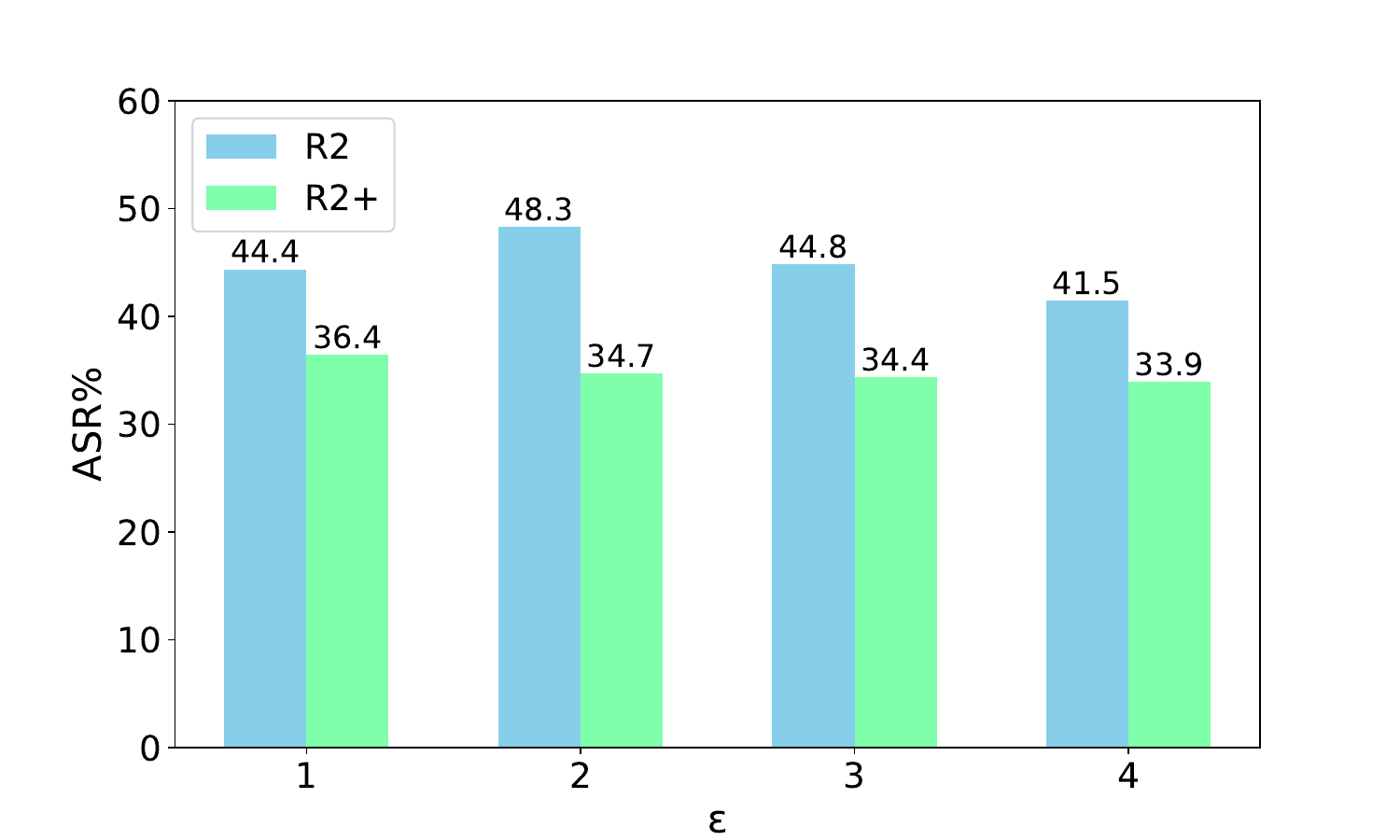}}
        \subfigure[Arch:100-50-32-10]{
        \label{Fig.sub.3}
        \includegraphics[scale=0.15]{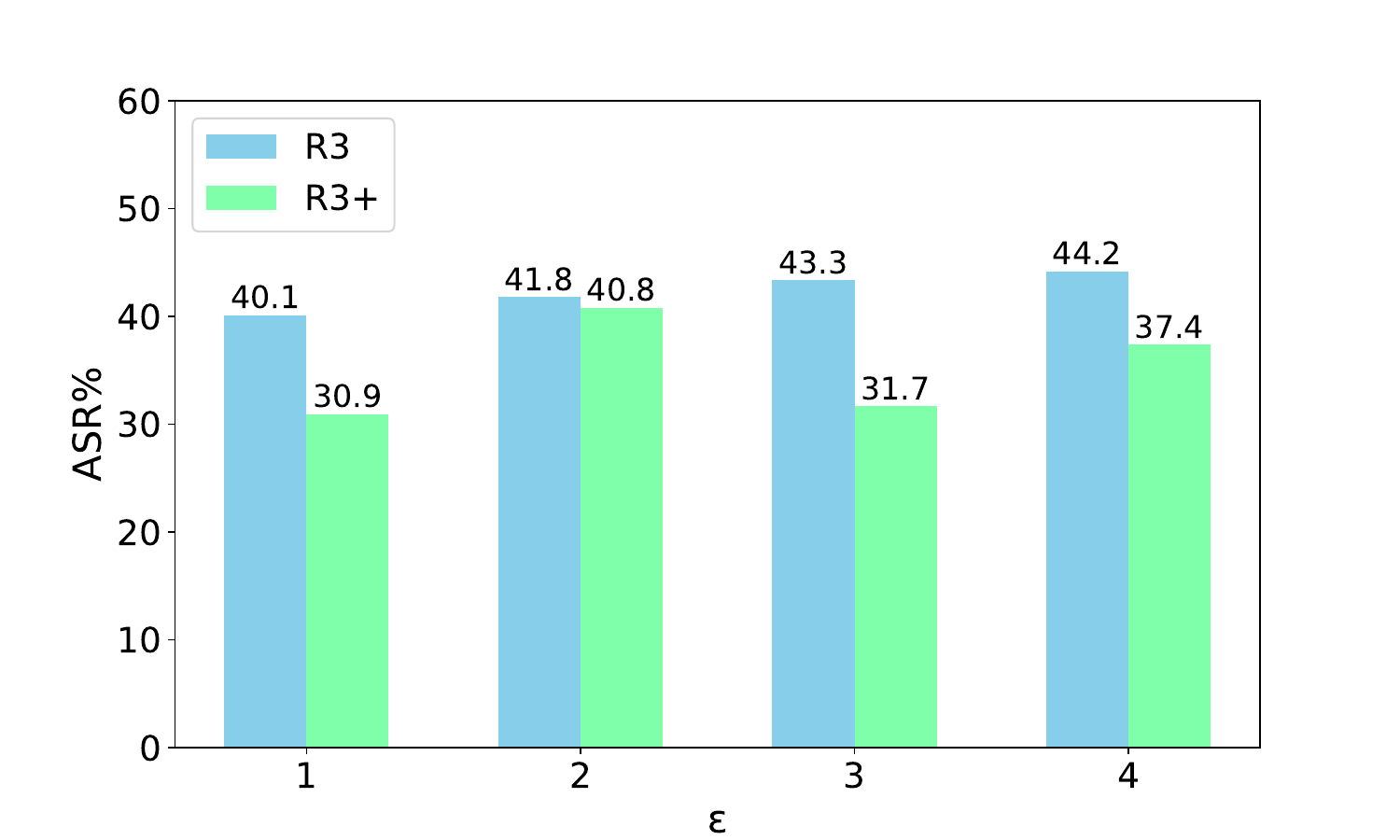}}
        \caption{Results of local robustness.}
        \vspace{-1.0em}
        \label{Fig.ROBUST}
    \end{figure}

    We also take $100$ samples for each case and 
    compare the \as for baselines and their synthesized counterparts.
    The results are shown in Figure \ref{Fig.ROBUST}, 
    blue  bars represent the baselines, 
    while orange bars represent synthesized models.
    We use the sign $+$ to denote the synthesized models.
    Figure~\ref{Fig.sub.1} (resp. Figure~\ref{Fig.sub.2} and Figure~\ref{Fig.sub.3}) depicts
    the percentage of average \as of  \textbf{R1} (resp. \textbf{R2} and \textbf{R3}) and the counterpart \textbf{R1+} (resp. \textbf{R2+} and \textbf{R3+}) (the vertical axis),
    with different $\epsilon$ (1, 2, 3, 4) (the horizontal axis).
    The results demonstrate a decrease in \as by an average of $43.45\%$, $22.12\%$, and $16.95\%$
    for \textbf{R1}, \textbf{R2} and \textbf{R3}, respectively. %

    Whist the models' robustness are enhanced, their accuracy are slightly decreased.
    Table \ref{Table.acc} shows the results of the accuracy of the models, 
    where \accplus represents the average accuracy for synthesized models with the same architectures.
    \begin{table}[ht]
    \vspace{-1.0em}
    \centering
    \caption{The average accuracy of \textbf{R1}-\textbf{R3} and their synthesized models.}
     \setlength{\tabcolsep}{3mm}{
    \begin{tabular}{lccc}
    \toprule
         & \textbf{R1}      & \textbf{R2}      & \textbf{R3}      \\ 
    \midrule
    \acc  & 82.62\% & 84.28\% & 83.50\% \\
    \accplus & 81.33\% & 81.72\% & 78.75\% \\ 
    \bottomrule
    \end{tabular}}
    \label{Table.acc}
    \vspace{-2.5em}
\end{table}
    \subsection{Individual Fairness}
    \label{SubSec: Fair}
    In this section, we investigate the individual fairness w.r.t two sensitive feature, namely 
    sex (Male and Female) and race (White and Black) on the \uci dataset. 

    We consider \textbf{F1}-\textbf{F3} as baselines, and randomly select 1000 entries for both \textbf{F1} and \textbf{F2}, and 200 entries for \textbf{F3} from the training set,
    and then generate proper pairs by modifying the value of the sensitive attribute while keeping all other attributes the same. 
    For example, we modify the value of Male to Female. 
    After forming specifications using the approach mentioned in Section \ref{SubSec: Example_Props} with the pairs, we proceed with the "enhancement" procedure and retraining to obtain the synthesized models.
    We then evaluate the models on the test set by the measuring the fairness score. 
    We count the number of the fair pairs (the pairs only differ in the sensitive attribute, and get the predication): \fairnum, and compute the fairness score, 
        $\frac{\fairnum}{\sizeoftestset}$,
        where \sizeoftestset is the size of the test set.

    \begin{table}[H]
        \centering
        \caption{Results of individual fairness.}
         \setlength{\tabcolsep}{2mm}{
        \begin{tabular}{ccccccc}
        \hline
        \multicolumn{1}{l}{\textbf{Model}} & \multicolumn{1}{l}{\textbf{Feature}} & \multicolumn{1}{l}{\textbf{\acc}} & \multicolumn{1}{l}{\textbf{\accplus}} & \multicolumn{1}{l}{\textbf{\fair}} & \multicolumn{1}{l}{\textbf{\fairplus}} & \multicolumn{1}{l}{\textbf{Synthesis Time(s)}} \\ \hline
        \rowcolor[HTML]{EFEFEF}
        \textbf{F1}                       & sex                         & 80.12\%                 & 74.53\%                  & 92.91\%                  & 99.94\%                   & 241.67                            \\
        \textbf{F1}                       & race                        & 80.12\%                 & 74.54\%                  & 92.92\%                  & 100\%                     & 216.46                            \\
        \rowcolor[HTML]{EFEFEF}
        \textbf{F2}                        & sex                         & 79.88\%                 & 75.71\%                  & 95.68\%                  & 97.83\%                   & 215.61                            \\
        \textbf{F2}                        & race                        & 79.88\%                 & 75.18\%                  & 94.64\%                  & 98.47\%                   & 212.46                            \\
        \rowcolor[HTML]{EFEFEF}
        \textbf{F3}                        & sex                         & 78.13\%                 & 74.48\%                  & 89.67\%                  & 99.83\%                   & 90.39                             \\
        \textbf{F3}                        & race                        & 79.88\%                 & 74.09\%                  & 89.16\%                  & 98.27\%                   & 95.75                             \\ \hline
        \end{tabular}}
        \label{Table.Fair}
    \end{table}

    The results are listed in Table \ref{Table.Fair}, where the baselines and the sensitive attributes shown in Column 1,2. 
    Column 3,4 (\acc/\accplus) demonstrate the accuracy of baselines and synthesized models, and Column 5,6 (\fair/\fairplus) show their fairness scores.
    The figure shows that the all models' individual fairness is significantly improved, with some even reach reaching $100\%$ (Row 2, the fairness score increase from $92.92\%$ to $100\%$). However, the enhancement is accompanied by the accuracy lost,  Column 3,4 show that all models suffer from a certain degree of accuracy decrease.
    Our tool efficiently synthesized the hyper-parameters within a few minutes, as shown in Column 7.

    Furthermore, we examine the ability of our approach on helping determine the architecture of the {\bnn}s.
    For both sex and race, 
    we sample $200$ entries in the training set to generate proper pairs, 
    and formulate the specification 
    without using the bound constraints or fixing the number of block, as follows,
    \vspace{-0.5em}
    \[
        \opF( \bigwedge_i^k(\bm{x}_i=\bm{y}_i)) \wedge
        (\bigwedge_i^k(\bm{x}_i = \rhd^2 \bm{a}_i \wedge \bm{y}_i = \rhd^2 \bm{b}_i) 
        \vee 
        (\bigwedge_i^k(\bm{x}_i = \rhd^3 \bm{a}_i \wedge \bm{y}_i = \rhd^3 \bm{b}_i)))
    \]
    where $(\bm{a}_i,\bm{b}_i)$ is the proper pair, and $k$ is the number of samples.
    The formula indicates the presence of consecutive blocks in the model, with a length of either 2 or 3. 
    For each proper pair $(\bm{a}_i, \bm{b}_i)$, their respective outputs $(\bm{x}_i, \bm{y}_i)$ must be equal.
    
    After synthesizing the partial input-output relation of $f_i$s,we determine the length of the network by selecting the maximum $i$ among $f_i$s.
    The dimensions of the blocks are set to the maximum input and output dimensions in the partial relation obtained for the corresponding $f_i$.
    
    \begin{table}[H]
    \centering
    \caption{The synthesized models which architecture are given by our tool.}
    \setlength{\tabcolsep}{3mm}{
    \begin{tabular}{cccccc}
    \hline
    \textbf{Attr}        & \textbf{Arch} & \textbf{Len} & \textbf{\#Mapping} & \textbf{\acc}    & \textbf{\fair}   \\ \hline
    \rowcolor[HTML]{EFEFEF} 
        sex & 66-10-10-2 & 3 & $1117$  &     $74.38\%$ &  $99.51\%$ \\
        sex & 66-8-2     & 2 & $559$   & $74.69\%$                     & $99.72\%$                       \\
    \rowcolor[HTML]{EFEFEF} 
    race                                            & 66-9-8-2 & 3 & $952$   & $74.38\%$                                           & $94.59\%$                                             \\
    race                                            & 66-8-2   & 2 & $567$   & $74.13\%$                                           & $99.71\%$                                             \\ \hline
    \end{tabular}}
    \label{Table.Fair2}
    \vspace{-2.0em}
    \end{table}
    
    We make a slight adjustment to the synthesis framework,
    when find a group of hyper-parameters, we continue searching for one more feasible group, resulting in two groups of hyper-parameters for sex and race.
    We showcase the synthesized models in Table \ref{Table.Fair2}.
    Column 1 indicates the sensitive attribute of interest,
    and Column 2,3 display the architecture and the length of the {\bnn}s respectively.
    Column 4 shows the number of partial mappings we obtained in the synthesis task.
    Our tool successfully generates  models with varying architectures and high individual fairness, which are
    presented in the Column 5,6 respectively.

 \section{Conclusion}
    \label{Sec: Con}
    In this paper, we have presented an automata-based approach to synthesizing binarized neural networks.
    Specifying {\bnn}s' properties with the designed logic \bltl, and using the tableau-based construction approach, the synthesis framework determine hyper-parameters of {\bnn}s and relations among some parameters, and then we may perform a block-wise training.
    We implemented a prototype tool and the experiments demonstrate the effectiveness of our approach in enhancing the local robustness and individual fairness of {\bnn}s. 
    Although our approach have shown the feasibility of synthesizing trustworthy {\bnn}s, there is still a need to further explore this line of work.
    In the future, beyond the input-output relation of {\bnn}s,
    we plan to focus on specifying properties between the intermediate blocks.
    Additionally, we aim to extend the approach to handle the synthesis task of multi-bits {\qnn}s. 
%
%
%

\subsubsection{Acknowledgements}  This work is partially  supported by the National Key R \& D Program of China (2022YFA1005101), the National Natural Science Foundation of China (61872371, 62072309, 62032024), CAS Project for Young Scientists in Basic Research (YSBR-040), and ISCAS New Cultivation Project (ISCAS-PYFX-202201).

\bibliographystyle{splncs04}
\bibliography{ref}

\appendix

    \section{Appendix}
    \subsection{Proof of Theorem \ref{Thm: Automata-Cons}}
    \label{Appendix: Automata-Cons}
    \begin{proof}
		Still let $f_0, f_1, \cdots, f_{n-1}$
		be the encoding of a \bnn $\network{N}$:
		\par
		\noindent $(\Rightarrow)$:
		Suppose that $\network{N}\in\lang{\automaton{A}_\varphi}$, and $q_0, q_1,\cdots, q_n$ be an accepting run
		of $\automaton{A}_\varphi$ on $\network{N}$, remind that each $q_i$ is a formula set, by induction  on both
		the index (in the backward way) and formulas' structure, we prove the following claim:
		\begin{quotation}
			$\network{N}, i\implies \psi$ for ever $\psi\in q_i$.
		\end{quotation}
		\begin{itemize}
			\item The case is trivial if $\psi=\top$; and $\psi\neq\bot$ since $q_0,q_1,\ldots q_n$ is an accepting run.
			\item If $\psi=\bm{t}_1\sim\bm{t}_2$, we need to distinguish two cases:
			\begin{enumerate}[1)]
				\item When $i=n$, then $\psi\in\cons(q_n)$. We have that $\psi\downarrow$ is evaluated to true,
				because $q_n\in F_\varphi$. Therefore, $\network{N}, n\models\psi$ holds from Thm. \ref{Thm: Intuition}.
				\item If $i<n$, then we have $\psi[f_i]\in q_{i+1}$ according to the automaton construction.
				Inductively, we have $\network{N},i+1\models\psi[f_i]$, and we can then conclude that $\network{N}, n\models\psi$
				according to Thm. \ref{Thm: Intuition}.
			\end{enumerate}
			\item If $\psi=\psi_1\wedge\psi_2$, then we have $\psi_1\in q_i$ and $\psi_2\in q_i$, because $q_i$ is some proper closure.
			Thus $\network{N}, i\models \psi_j$ holds for $j=1,2$ by induction.
			\item  The case of $\psi=\psi_1\vee\psi_2$ are similar to the above.
			\item If $\psi=\psi_1\opU\psi_2$, then $\psi_2\vee(\psi_1\wedge\opX(\psi_1\opU\psi_2))\in q_i$, and subsequently
			either $\psi_2\in q_i$ or both $\psi_1\in q_i$ and $\opX(\psi_1\opU\psi_2)\in q_i$. For the former case,
			we can ensure that $\network{N},i\models\psi_2$.
			For the latter case, since $\opX(\psi_1\opU\psi_2)\in q_i$ we can guarantee that $q_i\not\in F_\varphi$ and  $i<n$,
			subsequently $\psi_1\opU\psi_2\in q_{i+1}$.
			Therefore, we in this case have both $\network{N}, i\models\psi_1$ and $\network{N}, i+1\models\psi_1\opU\psi_2$ by induction.
			\item If $\psi=\psi_1\opR\psi_2$ then $\psi_2\wedge(\psi_1\vee\opXd(\psi_1\opR\psi_2))\in q_i$, which indicates
			that either $\psi_1,\psi_2\in q_i$ or $\psi_2,\opXd(\psi_1\opR\psi_2)\in q_i$. For the former case, we can easily infer
			$\network{N}, i\models\psi_j$ for $j=1,2$ by induction, which implies $\network{N}, i\models\psi$ holds.
			For the latter case, first, we have $\network{N}, i\models\psi_2$; in addition,  we have $\opXd(\psi_1\opR\psi_2)\in q_i$,
			and it could be distinguished by two cases:
			\begin{enumerate}[1)]
				\item $i=n$, then $\network{N}, i\models\opXd(\psi_1\opR\psi_2)$ trivially holds in this case;
				\item $i<n$, then we have $\psi_1\opR\psi_2\in q_{i+1}$, and we also have
				$\network{N}, i\models\opXd(\psi_1\opR\psi_2)$ by induction.
			\end{enumerate}
		\end{itemize}

		\noindent $(\Leftarrow)$: On the other way round, suppose that $\network{N}\models\varphi$, then let
		$$q_i = \{\psi\in\sub(\varphi)\mid\network{N}, i\models\psi\}$$
		for each $i\leq n$.
		
		We first show that that each $q_i$ is some proper closure
		of some subset of $\sub(\varphi)$. Therefore, we have $q_i\in Q_\varphi$ for each $i$.
		\begin{itemize}
			\item If $\psi_1\wedge\psi_2\in q_i$, then $\network{N}, i\models \psi_j$ for  $j=1,2$, thus
			both $\psi_1$ and $\psi_2$ are in $q_i$.
			\item  If $\psi_1\vee\psi_2\in q_i$, then either $\network{N}, i\models\psi_1$ or $\network{N}, i\models\psi_2$,
			which implies that $\psi_1\in q_i$ or $\psi_2\in q_i$.
			\item Suppose that $\psi_1\opU\psi_2\in q_i$, we can immediately infer that
			$\psi_2\vee(\psi_1\wedge\opX(\psi_1\opU\psi_2))\in q_i$ according to Proposition~\ref{Lem:BLTL_Equiv}.
			\item Similar for the case for the formula $\psi_1\opR\psi_2$.
		\end{itemize}
		Thus, we can conclude that each $q_i\in Q_\varphi$.  Next, we also need to show that
		$q_{i+1}\in\delta(q_i,f_i)$ for every $i<n$.
		\begin{itemize}
			\item First of all, since $q_i$ consists of closed formulas which is satisfied by $\network{N}$ at step $i$, we can conclude that
			$\bot\not\in q_i$.
			\item For each constraint $\gamma\in\cons(q_i)$, according to Theorem \ref{Thm: Intuition} and the construction,
			we have $\gamma(f_i)\in q_{i+1}$. Therefore, $\cons(q_i)[f_i]\subseteq q_{i+1}$.
			\item For each $\opX\psi\in q_i$, since $i<n$ and $\network{N}, i\models\opX\psi$, then we have $\psi\in q_{i+1}$
			and subsequently $q'_i\subseteq q_{i+1}$ (cf. the automaton construction).
			Likewise, we can also infer that $q''_i\subseteq q_{i+1}$.
			\item Therefore, $q_{i+1}\in\cl(\cons(q_i)[f_i]\cup q'_i\cup q''_i)$, because $q_{i+1}$ must be some proper closure.
		\end{itemize}
		In addition, we have $\varphi\in q_0$ because  $\network{N}\models\varphi$, and we thus have $q_0\in I_\varphi$.
		Moreover, we claim that $q_n\in F_\varphi$ due to $\network{N}, n\models\psi$ for each $\psi\in q_n$, in detail:
		\begin{itemize}
			\item $\gamma\downarrow$ has to evaluated to true, if $\gamma\in\cons(q_n)$, according to Theorem \ref{Thm: Intuition};
			\item $q'_n$ must be $\emptyset$ according to the semantics definition on $\opX$ operator.
		\end{itemize}
		Then, we can conclude that $q_0,q_1,\ldots, q_n$ is an accepting run of $\automaton{A}_\varphi$ on $\network{N}$.
	\end{proof}

    \subsection{Example of the Automata Construction}
    \label{Appendix:example}
        \begin{figure}[t]
        \centering
        \includegraphics[width=0.7\textwidth]{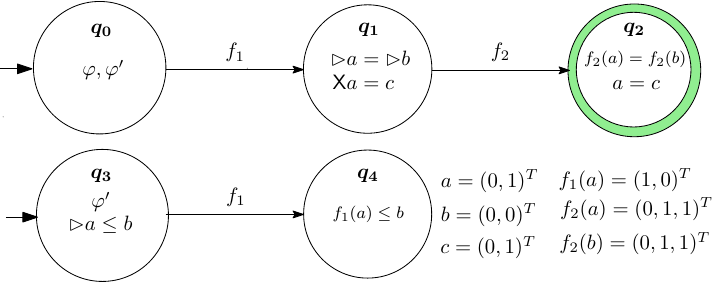}
        \caption{
            The automaton for  $\varphi=\opX \left(\left(\opN \bm{a} = \opN \bm{b}\right) \wedge \opX \left(\bm{a} = \bm{c}\right)\right) 
            \vee  \opN \bm{a} \le \bm{b}$, where
            $\varphi' = \opX \left(\left(\opN \bm{a} = \opN \bm{b}\right) \wedge \opX \left(\bm{a} = \bm{c}\right)\right)$.
        }
        \label{Fig.automaton}
        \end{figure}
        Consider the \bltl formula 
        $\varphi = \opX \left(\left(\opN \bm{a} = \opN \bm{b}\right) \wedge \opX \left(\bm{a} = \bm{c}\right)\right) 
        \vee  \opN \bm{a} \le \bm{b}$.
        We exemplify the automata construction using $\varphi$.
        The constructed automaton is shown in Fig~\ref{Fig.automaton}, where
        $q_1, q_3 \in I_\varphi$, and $q_2 \in F_\varphi$. It is easy to see that
            $q_0,q_1,q_2$ is an accepting path over the input $f_1,f_2$.
        We note that 
        $q_4 \not\in F_\varphi$, since the formula $f_1(a) \le b$ is evaluated false.

    \subsection{Proof of Theorem \ref{Thm: Threshold}}
    \label{Appendix: Threshold}
    \begin{proof}
		Observe that a counter remains unchanged  a \tname{Modal}-node is encountered, thus we
		concentrate to the slicing of the tableau consisting of \tname{Modal}-nodes only.
		First of all, we have the following observations:
		\begin{itemize}
			\item Suppose, $\langle i,\Gamma\rangle$ is a \tname{Modal}-node,
			a (padded or non-padded, but not saturated) constraint $\gamma\in\Gamma$, then we ensure that $\gamma[f_i]$ must occur in the next \tname{Modal}-node whose counter is $i+1$.
			\item Thus, each padded constraint in a \tname{Modal}-node must be of
			the form $\gamma[f_{\ell},f_{\ell+1},\ldots,f_{\ell+t}]$
			where $\gamma$ is a non-padded constraint, and $t<\len(\gamma)$. Namely,
			indices of the layer variables of a padded constraint must be successive.
			For such a constraint, we call $\ell$ and $\ell+t$ the \emph{starting index} and the \emph{ending index}, respectively.
			\item In a same \tname{Modal}-node, all padded constraints share a same
			ending index, but their staring indices may be different.
			Thus, for a \tname{Modal}-node, if the (common) ending index is $w$,
			then each starting index must be less than $w-k$.
			Since we are now concerned about the number of equivalent classes of
			$\cong$, according to Lemma \ref{Lem: Cong}, we may fix the ending index to be $k$, therefore starting indices belong to the set $\{1, 2,\ldots, k\}$.
			\item Call two padded constraints to be \emph{homologous} if they are
			obtained from a same original constraint via applying different layer variable list. Note that homologous is also an equivalent relation,
			and each equivalent class must be of the form
			$\{\gamma[f_\ell, f_{\ell+1},\ldots,f_k] \mid \ell\leq 1\}$ for each $\gamma\in\cons(\bm{\Sigma})\cap\sub(\varphi)$, denoted that set as $H(\gamma)$.
		\end{itemize}
	 Let us now count the upper bound of the equivalence class number of \tname{Modal}-nodes. In a \tname{Modal}-node, we categorize the formulas into two sets: the first consists of constraints, and the second one is
	 constituted with $\opX$- and/or $\opXd$-guarded formulas.
	 \begin{enumerate}[(1)]
	 	\item For each original constraint $\gamma$, the first set may contain a
	 	subset of $H(\gamma)\cup\{\gamma\}$, hence this part has no more than $2^{(k+1)c}$ possibles.
	 	\item In a \tname{Modal}-node, each $\opX$-guarded (resp. $\opXd$-guarded) formula corresponds a subformula of $\varphi$, whose
	 	out-most operator is either $\opX$ (resp. $\opXd$) or $\opU$ (resp. $\opR$). Thus, the number of such formulas occurring in the node is not
	 	more than $p$, and such part yields not more than $2^p$ subsets.
	 \end{enumerate}
     As a result, once the counter becomes $2^{(k+1)c+p}+1$, we may declare that
     some isomorphic \tname{Model}-node already exists in the current path, hence
     it could be a candidate value of threshold.
	\end{proof}
\end{document}